\def\eqref#1{equation~\ref{#1}}
\def\1{\bm{1}}
\def\rvg{{\mathbf{g}}}
\def\rvu{{\mathbf{i}}}
\def\rvu{{\mathbf{u}}}
\def\rvx{{\mathbf{x}}}
\def\rvz{{\mathbf{z}}}
\def\mA{{\bm{A}}}
\def\mJ{{\bm{J}}}
\DeclareMathAlphabet{\mathsfit}{\encodingdefault}{\sfdefault}{m}{sl}
\SetMathAlphabet{\mathsfit}{bold}{\encodingdefault}{\sfdefault}{bx}{n}
\newtheorem{theorem}{Theorem}
\newtheorem{prop}[theorem]{Proposition}
\newtheorem{lemma}[theorem]{Lemma}
\newtheorem{definition}{Definition}
\newtheorem{assumption}{Assumption}
\newcommand{\syf}[1]{%
  \textcolor{black}{#1}%
}
\begin{document}
\renewcommand\arraystretch{0.5}
%
% paper title
% Titles are generally capitalized except for words such as a, an, and, as,
% at, but, by, for, in, nor, of, on, or, the, to and up, which are usually
% not capitalized unless they are the first or last word of the title.
% Linebreaks \\ can be used within to get better formatting as desired.
% Do not put math or special symbols in the title.
\title{Time Series Domain Adaptation via Latent Invariant Causal Mechanism}
%
%
% author names and IEEE memberships
% note positions of commas and nonbreaking spaces ( ~ ) LaTeX will not break
% a structure at a ~ so this keeps an author's name from being broken across
% two lines.
% use \thanks{} to gain access to the first footnote area
% a separate \thanks must be used for each paragraph as LaTeX2e's \thanks
% was not built to handle multiple paragraphs
%
%
%\IEEEcompsocitemizethanks is a special \thanks that produces the bulleted
% lists the Computer Society journals use for "first footnote" author
% affiliations. Use \IEEEcompsocthanksitem which works much like \item
% for each affiliation group. When not in compsoc mode,
% \IEEEcompsocitemizethanks becomes like \thanks and
% \IEEEcompsocthanksitem becomes a line break with idention. This
% facilitates dual compilation, although admittedly the differences in the
% desired content of \author between the different types of papers makes a
% one-size-fits-all approach a daunting prospect. For instance, compsoc 
% journal papers have the author affiliations above the "Manuscript
% received ..."  text while in non-compsoc journals this is reversed. Sigh.

\author{Ruichu Cai,
        Junxian Huang,
        Zhenhui Yang,
        Zijian Li,
        Emadeldeen Eldele,
        Min Wu,
        Fuchun Sun, \textit{Fellow, IEEE}
        % <-this % stops a space
\IEEEcompsocitemizethanks{
\IEEEcompsocthanksitem Ruichu Cai is with the School of Computer Science, Guangdong University of Technology, Guangzhou, China, 510006 and Peng Cheng Laboratory, Shenzhen, China, 518066.
Email: cairuichu@gmail.com\protect
\IEEEcompsocthanksitem Junxian Huang is with the School of Computer Science, Guangdong University of Technology, Guangzhou China, 510006.\protect Email: huangjunxian459@gmail.com\protect
\IEEEcompsocthanksitem
Zhenhui Yang is with the School of Computing, Guangdong University of Technology, Guangzhou China, 510006.\protect
% note need leading \protect in front of \\ to get a newline within \thanks as
% \\ is fragile and will error, could use \hfil\break instead.
E-mail: yangzhenhui8@gmail.com
\IEEEcompsocthanksitem Zijian Li is with the Machine Learning Department, Mohamed bin Zayed University of Artificial Intelligence, Abu Dhabi.\protect Email: leizigin@gmail.com\protect

\IEEEcompsocthanksitem Emadeldeen Eldele is with the Institute for Infocomm Research, A*STAR, Singapore, Centre for Frontier AI Research, A*STAR, Singapore \protect Email: emad0002@ntu.edu.sg\protect
% \IEEEcompsocthanksitem Zhifeng Hao is with the College of Science, Shantou University, Shantou, Guangdong, 515063\protect
\IEEEcompsocthanksitem Min Wu is with the Institute for Infocomm Research, A*STAR, Singapore. E-mail: wumin@i2r.a-star.edu.sg\protect
\IEEEcompsocthanksitem Fuchun Sun is with the Department of Computer Science and Technology, Tsinghua University, Beijing, China. E-mail: fcsun@tsinghua.edu.cn\protect
}% <-this % stops an unwanted space
\thanks{This research was supported in part by the National Key R\&D Program of China (2021ZD0111501), National Science Fund for Excellent Young Scholars (62122022), Natural Science Foundation of China (61876043, 61976052), the major key project of PCL (PCL2021A12). (Min Wu is the Corresponding author.)}}

% note the % following the last \IEEEmembership and also \thanks - 
% these prevent an unwanted space from occurring between the last author name
% and the end of the author line. i.e., if you had this:
% 
% \author{....lastname \thanks{...} \thanks{...} }
%                     ^------------^------------^----Do not want these spaces!
%
% a space would be appended to the last name and could cause every name on that
% line to be shifted left slightly. This is one of those "LaTeX things". For
% instance, "\textbf{A} \textbf{B}" will typeset as "A B" not "AB". To get
% "AB" then you have to do: "\textbf{A}\textbf{B}"
% \thanks is no different in this regard, so shield the last } of each \thanks
% that ends a line with a % and do not let a space in before the next \thanks.
% Spaces after \IEEEmembership other than the last one are OK (and needed) as
% you are supposed to have spaces between the names. For what it is worth,
% this is a minor point as most people would not even notice if the said evil
% space somehow managed to creep in.

% The paper headers
\markboth{Journal of \LaTeX\ Class Files,~Vol.~14, No.~8, August~2015}%
{Shell \MakeLowercase{\textit{et al.}}: Bare Demo of IEEEtran.cls for Computer Society Journals}
% The only time the second header will appear is for the odd numbered pages
% after the title page when using the twoside option.
% 
% *** Note that you probably will NOT want to include the author's ***
% *** name in the headers of peer review papers.                   ***
% You can use \ifCLASSOPTIONpeerreview for conditional compilation here if
% you desire.

% The publisher's ID mark at the bottom of the page is less important with
% Computer Society journal papers as those publications place the marks
% outside of the main text columns and, therefore, unlike regular IEEE
% journals, the available text space is not reduced by their presence.
% If you want to put a publisher's ID mark on the page you can do it like
% this:
%\IEEEpubid{0000--0000/00\$00.00~\copyright~2015 IEEE}
% or like this to get the Computer Society new two part style.
%\IEEEpubid{\makebox[\columnwidth]{\hfill 0000--0000/00/\$00.00~\copyright~2015 IEEE}%
%\hspace{\columnsep}\makebox[\columnwidth]{Published by the IEEE Computer Society\hfill}}
% Remember, if you use this you must call \IEEEpubidadjcol in the second
% column for its text to clear the IEEEpubid mark (Computer Society jorunal
% papers don't need this extra clearance.)

% use for special paper notices
%\IEEEspecialpapernotice{(Invited Paper)}

% for Computer Society papers, we must declare the abstract and index terms
% PRIOR to the title within the \IEEEtitleabstractindextext IEEEtran
% command as these need to go into the title area created by \maketitle.
% As a general rule, do not put math, special symbols or citations
% in the abstract or keywords.
\IEEEtitleabstractindextext{%
\begin{abstract}\justifying
% Time series domain adaptation methods aim to transfer discriminative patterns from the labeled source domain to the unlabeled target domain. Although recent methods achieve empirically ideal transportation results by enforcing the alignment of causal structures across different domains, they are not suitable for the high-dimension data e.g., video data, which do not have direct causal edges on the observation level. Moreover, existing methods based on causality are heuristic and lack
% a proper theoretical foundation. To address these challenges, we propose a general invariant latent causal mechanism learning framework for time series domain adaptation. Specifically, we assume that the causal relationships occur among latent variables instead of observed variables. Moreover, we further theoretically show that the learned causal mechanisms are uniquely identified. Based on these theoretical results, we develop a Latent Causality Alignment (\textbf{LCA}) model that leverages variational inference. Furthermore, the \textbf{LCA} model incorporates flow-based prior networks to extract the latent mechanisms and latent mechanism alignment restriction to reduce the diversity of latent causal structures. Experiment results on eight benchmarks show a general improvement in the domain-adaptive time series classification and forecasting tasks., highlighting the effectiveness of our method in real-world scenarios.
Time series domain adaptation aims to transfer the complex temporal dependence from the labeled source domain to the unlabeled target domain. Recent advances leverage the stable causal mechanism over observed variables to model the domain-invariant temporal dependence. However, modeling precise causal structures in high-dimensional data, such as videos, remains challenging. Additionally, direct causal edges may not exist among observed variables (e.g., pixels). These limitations hinder the \textcolor{black}{applicability} of existing approaches to real-world scenarios. To address these challenges, we find that the high-dimension time series data are generated from the low-dimension latent variables, which motivates us to model the causal mechanisms of the temporal latent process. Based on this intuition, we propose a latent causal mechanism identification framework that guarantees the uniqueness of the reconstructed latent causal structures. Specifically, we first identify latent variables by utilizing sufficient changes in historical information. Moreover, by enforcing the sparsity of the relationships of latent variables, we can achieve identifiable latent causal structures. Built on the theoretical results, we develop the Latent Causality Alignment (\textbf{LCA}) model that leverages variational inference, which incorporates an intra-domain latent sparsity constraint for latent structure reconstruction and an inter-domain latent sparsity constraint for domain-invariant structure reconstruction. Experiment results on eight benchmarks show a general improvement in the domain-adaptive time series classification and forecasting tasks, highlighting the effectiveness of our method in real-world scenarios. Codes are available at \url{https://github.com/DMIRLAB-Group/LCA}.

% we establish a general latent causal mechanism learning framework that guarantees unique reconstruction of 

% By leveraging the stable Granger causality, recent advances have proposed the causal conditional shift assumption to model the domain-invariant temporal dependence. 

% Recent advances have proposed to leverage the stable Granger causality to address the 
% 时间序列的da很重要
% 观测变量层面上的因果结构没有意义
% 现有的工作难以学习到高维不变的时序特征
% 现有的工作学到的因果结构是启发性的，并没有理论保证 

% 基于时间序列的领域自适应方法旨在从src到tgt提取领域不变的parttern。虽然现有的方法利用不变因果机制在低维数中取得实验性的好效果，但是

\end{abstract}
% 1. 解决什么问题（DA）
% 2. 难点在哪儿，联合分布不知道，可以说什么是变的
% 3. 解决方法，提出motvation（一句话让别人知道大概怎么做，而且是惊人的想法）花两三句话来详细介绍自己的方法
% 4. 这样做的好处有什么，实验结果，代码等等
% no keywords
% 我的版本
% 解决时间序列预测的DA问题
% 难点在于基于多维时间序列的复杂依赖是根据domain变化而变化的，这个变化主要体现在三个方面：Offset，Time Lag, 和条件分布的变化。
% 为了解决这个问题，我们使用稳定的简洁的格兰杰因果关系来建模这种复杂的domain-specific的依赖关系。这种简洁的格兰杰因果关系不但可以通过避免直接对齐数据值从而绕开offset问题，而且通过考虑不同time lag之间的因果图从而可以很好地刻画条件分布的变和不变的部分。
% 通过这样做，我们提供了一种端对端的时间序列预测迁移模型，它不仅仅能够跨领域学习格兰杰因果关系，而且很好地通过学习不同domain的依赖关系来预测多维时间序列，更能为预测结果提过一定的可解释性。我们进一步从理论上分析了基于格兰杰因果关系对齐的时间序列预测迁移模型的优越性，我们在生成数据集和真实数据集上验证了我们的方法。代码公布xxxx

% Note that keywords are not normally used for peerreview papers.
\begin{IEEEkeywords}
Time Series Data, Latent Causal Mechanism, Domain Adaptation
\end{IEEEkeywords}}

% make the title area
\maketitle

% To allow for easy dual compilation without having to reenter the
% abstract/keywords data, the \IEEEtitleabstractindextext text will
% not be used in maketitle, but will appear (i.e., to be "transported")
% here as \IEEEdisplaynontitleabstractindextext when the compsoc 
% or transmag modes are not selected <OR> if conference mode is selected 
% - because all conference papers position the abstract like regular
% papers do.
\IEEEdisplaynontitleabstractindextext
% \IEEEdisplaynontitleabstractindextext has no effect when using
% compsoc or transmag under a non-conference mode.

% For peer review papers, you can put extra information on the cover
% page as needed:
% \ifCLASSOPTIONpeerreview
% \begin{center} \bfseries EDICS Category: 3-BBND \end{center}
% \fi
%
% For peerreview papers, this IEEEtran command inserts a page break and
% creates the second title. It will be ignored for other modes.

\IEEEpeerreviewmaketitle

\IEEEraisesectionheading{\section{Introduction}\label{sec:introduction}}
% \begin{figure*}
% 	\centering
%     \label{fig:motivation}
% 	\includegraphics[width=2\columnwidth, trim=0cm 7.5cm 0cm 0cm ,clip]{}
%     \caption{Domain adaptation methods for time series data with different temporal domains, where \( x \) represents observed data and \( z \) denotes latent variables. (a) Existing methods that reuse the covariate shift assumption consider all relationships, including spurious ones, which can lead to performance degradation. (b) A method addressing domain adaptation in low-dimensional time series data by extracting causal structures and modeling conditional distributions specific to each domain, but struggles with capturing causal mechanisms when observed variables are high-dimensional. (c) Our approach, which starts from the perspective of latent variables, more generally considers the data generation process and effectively extracts causal mechanisms.}
% \end{figure*}

To overcome the challenges of distribution shift between the training and the test time series datasets \cite{pan2009survey}, time series domain adaptation seeks to transfer the temporal dependence from labeled source data to unlabeled target data. Mathematically, in the context of time series domain adaptation, we let $X=(\rvx_1,\cdots,\rvx_{\tau},\cdots,\rvx_t)$ be multivariate time series with $t$ timestamps \textcolor{black}{and a channel size of $n$}, where $\rvx_{\tau} \in \mathbb{R}^n$ and $Y$ is the corresponding label\textcolor{black}{, such that $Y$ can be a time series or scalar, depending on the forecasting or classification tasks.} By considering $\rvu$ as the domain label, we further assume that $P(X, Y|\rvu^S)$ and $P(X, Y|\rvu^T)$ are the source and target distributions, respectively. In the source domain, we can access $m^S$ annotated $X$-$Y$ pairs, represented as $(X^S, Y^S)=(X^S_{i}, Y^S_{i})_{i=1}^{m^S}$. While in the target domain, only $m_T$ unannotated time series data can be observed, denoted by $(X^T)=(X^T_{i})_{i=1}^{m_T}$. The primary goal of unsupervised time series domain adaptation is to model the target joint distribution $P(X,Y|\rvu^T)$ by leveraging the labeled source and the unlabeled target data.

% 将方法分成几类
% 静态方法拓展的
% 基于时序特性的

Several methods have been proposed to address this problem. Previous works have extended the conventional assumptions of domain adaptation for static data \cite{zhang2015multi,cai2019learning} to time series data, including covariate shift \cite{sugiyama2007covariate}, label shift \cite{lipton2018detecting}, and conditional shift \cite{zhang2013domain}. For instance, leveraging the covariate shift assumption, i.e., $P(Y|X)$ is stable, \textcolor{black}{some researchers \cite{da2020remaining,purushotham2022variational}} employ recursive neural network-based architecture as feature extractor and adopt adversarial training strategy to extract domain-invariant information. \textcolor{black}{Others, e.g., Wilson et al.} \cite{wilson2023calda}, utilize adversarial and contrastive learning to extract the domain-invariant representation for time series data. Moreover, other researchers assume that the distribution shift of $Y$ varies across different domains, known as label shift. Specifically, \textcolor{black}{He et al.} \cite{he2023domain} tackle the feature shift and label shift by aligning both temporal and frequency features. Conversely, \textcolor{black}{Hoyez et al. \cite{hoyeztime}} argue that the content shift and style shift in time series data belong to the conditional shift, where $P(X|Y)$ varies with different domains in time series domain adaptation.

% Whereas \cite{hoyeztime} consider the conditional shift, i.e., $P(X|Y)$ varies with different domains in time series domain adaptation.
\begin{figure*}[h]
    \centering
    \includegraphics[width=0.95\textwidth]{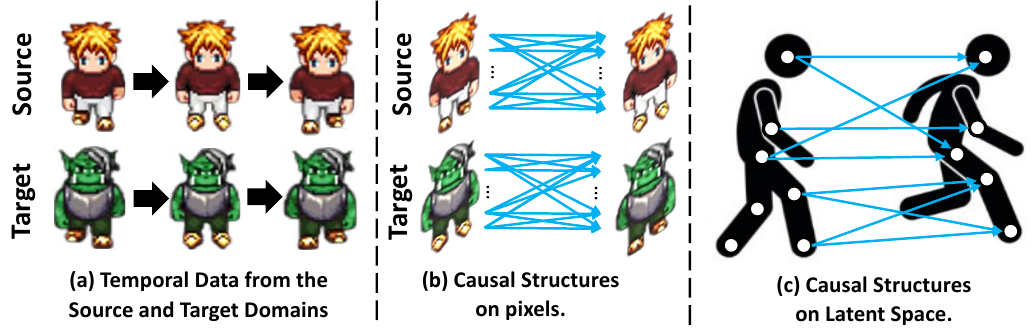}
    \caption{A toy domain adaptation example for video data, where the blue arrows denote the estimated causal relationships. (a) Two three-frame videos, one featuring a walking human and the other a monster, represent the source and target domains, respectively. (b) Since the skeleton variables are latent confounders and there are no causal relationships among pixels, the existing methods may learn dense and fault causal relationships among adjacent two frames, failing to extract the domain-invariant causal mechanism. (c) By identifying the latent variables like the joints in the skeleton, it is convenient to model the latent invariant causal mechanisms behind observed variables. Hence, we can address the time series domain adaptation problem in complex real-world scenarios.}
    \label{fig:motivation}
\end{figure*}
Instead of introducing assumptions from a statistical perspective, another promising direction is to harness the invariant temporal dependencies for time series domain adaptation. Specifically, Cai et al. \cite{Cai_Chen_Li_Chen_Zhang_Ye_Li_Yang_Zhang_2021} address time series domain adaptation by aligning sparse associate structures across different domains. \textcolor{black}{Li et al. }\cite{LI2024106659} further extend this approach by considering the domain-specific strengths of the domain-invariant sparse associative structures. Recently, Wang et al. \cite{wang2023sensor} proposed reducing distribution shift at both local and global sensor levels by aligning sensor relationships. Moreover, \textcolor{black}{Wang et al.} \cite{wang2024sea++} exploit multi-graph-based higher-order sensor alignment to achieve more robust adaptation. To further utilize domain-invariant temporal causal dependencies, \textcolor{black}{Li et al. \cite{li2023transferable}} proposed leveraging stable causal structures over observed variables, introducing the concept of causal conditional shift.

Although existing methods \cite{li2023transferable} demonstrate effective transfer performance on time series by extracting domain-invariant causal mechanisms, they implicitly assume that the dimension of observed variables is small and that direct causal edges exist among these variables. Specifically, most of the existing methods for Granger causal discovery \cite{9376668,khanna2019economy,marcinkevivcs2021interpretable,li2023transferable} are designed for low-dimension time series data (e.g., $n \leq 20$). However, many real-world time series datasets, such as electricity load diagrams or weather, are high-dimensional, limiting the applicability of these methods. 

Furthermore, direct causal relationships may not usually exist in these high-dimension data. For instance,  Figure \ref{fig:motivation} (a) provides a toy example of three-frame video data, where the walking human and monster denote the source and target domains, respectively. As shown in Figure \ref{fig:motivation} (b), existing methods like \cite{li2023transferable}, that estimate Granger causal structure over observed variables, may generate incorrect and dense causal structures since there are no causal relationships in the pixel level. As a result, these methods fail to extract domain-invariant causal mechanisms, limiting their effectiveness in addressing the time series domain adaptation problem.

% To overcome the aforementioned problems, we propose a latent causality alignment (\textbf{LCA}) framework for time series domain adaptation by assuming that the high-dimension time series data are generated from low-dimension time series data, where the invariant causal mechanisms exist. Intuitively, as shown in Figure \ref{fig:motivation} (c), the videos with the motion of walking from different domains are generated from a domain-invariant skeleton, which motivates us to exploit the causal structures over latent variables for time series domain adaptation. Based on this intuition, we 
% they can hardly extend to complex real-world scenarios. 

To address the aforementioned problems, an intuitive solution is based on the observation that videos depicting walking motion from different domains are generated from a domain-invariant skeleton, as shown in Figure \ref{fig:motivation} (c). This insight motivates us to exploit the causal structures over latent variables for time series domain adaptation. Building on this intuition, we prove that the latent causal process can be uniquely reconstructed, i.e., achieving identifiability, with the aid of nonlinear independent component analysis. 

Guided by these theoretical results, we develop a latent causality alignment model (\textbf{LCA} in short). Specifically, the proposed \textbf{LCA} model employs variational inference to model the time series data from the source and target domains, utilizing a flow-based prior estimator. Moreover, we incorporate a gradient-based sparsity constraint to discover the Granger causal structures on latent variables. Furthermore, we harness the latent causality regularization to restrict the discrepancy of the causal structures from different domains. Our method is validated on eight time-series domain adaptation datasets, including video and high-dimension electricity load diagram datasets. The impressive performance, outperforming state-of-the-art methods, demonstrates the effectiveness of our approach.

%1. 从观测变量的因果结构到隐变量的因果结构对其，首次将因果表征对其方法应用到时间序列的领域自适应中
%2. 隐变量的因果结构提供可识别性
% 建立了一个通用的时间序列领域自适应框架，并且在分类和预测两个下游任务共8个数据集上做验证，实验结果表明有通用的提升

\textcolor{black}{Our contributions can be summarized as follows:}
\begin{itemize}
    \item \textcolor{black}{Breaking out the limitation of causality alignment on observed variables, we develop a latent causal structure alignment method for time-series domain adaptation from the view of causal representation learning. we are the first to employ causal representation learning to time series domain adaptation.}
    \item \textcolor{black}{Different from previous works that leverage causality structure for time series domain adaptation, we provide formal identification guarantees for both the causal representation and the latent causal structures.}
    \item \textcolor{black}{We propose a general framework for time series domain adaptation, validated through extensive experiments on time series classification and forecasting tasks across eight datasets. The results consistently demonstrate significant performance improvements over state-of-the-art methods.} 
\end{itemize}

\section{Related Works}\label{related_works}

% \subsection{Unsupervised Domain Adaptation}

% \subsection{Domain Adaptation on Temporal Data}

% \subsection{Identification of Generative Model}

\subsection{Unsupervised Domain Adaptation}
Unsupervised domain adaptation \cite{cai2019learning,kong2022partial,shui2021aggregating,stojanov2021domain,wen2019bayesian,zhang2013domain} aims to leverage the knowledge from a labeled source domain to an unlabeled target domain, by training a model to domain-invariant representations \cite{bousmalis2016domain}. Researchers have adopted different directions to tackle this problem. For example, Long et al. \cite{long2017deep} trained the model to minimize a similarity measure, i.e., maximum mean discrepancy (MMD), to guide learning domain-invariant representations. Tzeng et al. \cite{tzeng2014deep} used an adaptation layer and a domain confusion loss. Another direction is to assume the stability of conditional distributions across domains and extract the label-wise domain-invariant representation \cite{chen2019joint,chen2019progressive,kang2020contrastive}. For instance, Xie et al. \cite{xie2018learning} constrained the label-wise domain discrepancy, and Shu et al. \cite{shu2018dirt} considered that the decision boundaries should not cross high-density data regions, so they propose the virtual adversarial domain adaptation model. Another type of assumption is the target shift \cite{lipton2018detecting,roberts2022unsupervised,wen2020domain,zhang2013domain}, which assumes $p(Y|\rvu)$ varies across different domains. 

Besides, several methods address the domain adaptation problem from a causality perspective. Specifically, Zhang et al. \cite{zhang2013domain} proposed the target shift, conditional shift, and generalized target shift assumptions, based on the premise that $p(Y)$ and $P(X|Y)$ vary independently. Cai et al. \cite{cai2019learning} leveraged the data generation process to extract the disentangled semantic representations. Building on causality analysis, Petar et al. \cite{stojanov2021domain} highlighted the significance of incorporating domain-specific knowledge for learning domain-invariant representation. Recently, Kong et al. \cite{kong2022partial} addressed the multi-source domain adaptation by identifying the latent variables, and Li et al. \cite{li2024subspace} further relaxed the identifiability assumptions.

\subsection{Domain Adaptation on Temporal Data}
In recent years, domain adaptation for time series data has garnered significant attention. Da Costa et al. \cite{da2020remaining} is one of the earliest time series domain adaptation works, where authors adopted techniques originally designed for non-time series data to this domain, incorporating recurrent neural networks as feature extractors to capture domain-invariant representations. Purushotham et al. \cite{purushotham2022variational} further refined this approach by employing variational recurrent neural networks \cite{chung2015recurrent} to enhance extracting domain-invariant features. However, such methods face challenges in effectively capturing domain-invariant information due to the intricate dependencies between time points.
Subsequently, Cai et al. \cite{cai2021time} proposed the Sparse Associative Structure Alignment (SASA) method, based on the assumption that sparse associative structures among variables remain stable across domains. This method has been successfully applied to adaptive time series classification and regression tasks. Additionally, Jin et al. \cite{jin2022domain} introduced the Domain Adaptation Forecaster (DAF), which leverages statistical relationships from relevant source domains to improve performance in target domains. Li et al. \cite{li2023transferable} hypothesized that causal structures are consistent across domains, leading to the development of the Granger Causality Alignment (GCA) approach. This method uncovers underlying causal structures while modeling shifts in conditional distributions across domains.

Our work also relates to domain adaptation for video data, which could be considered a form of high-dimensional time series data. Video data offers a robust benchmark for evaluating the performance of our method. Unsupervised domain adaptation for video data has recently attracted substantial interest. For instance, Chen et al. \cite{chen2019temporal} proposed a Temporal Attentive Adversarial Adaptation Network (TA3N), which integrates a temporal relation module to enhance temporal alignment. Choi et al. \cite{choi2020shuffle} proposed the SAVA method, which leverages self-supervised clip order prediction and attention-based alignment across clips. In addition, Pan et al. proposed the Temporal Co-attention Network (TCoN) \cite{pan2020adversarial}, which employs a cross-domain co-attention mechanism to identify key frames shared across domains, thereby improving alignment. 

Luo et al. \cite{luo2020adversarial} focused on domain-agnostic classification using a bipartite graph network topology to model cross-domain correlations. Rather than relying on adversarial learning, Sahoo et al. \cite{sahoo2021contrast} developed CoMix, an end-to-end temporal contrastive learning framework that employs background mixing and target pseudo-labels. More recently, Chen et al. \cite{chen2022multi} introduced multiple domain discriminators for multi-level temporal attentive features to achieve superior alignment, while Turrisi et al.~\cite{da2022dual} utilized a dual-headed deep architecture that combines cross-entropy and contrastive losses to learn a more robust target classifier. Additionally, Wei et al.~\cite{wei2023unsupervised} employed contrastive and adversarial learning to disentangle dynamic and static information in videos, leveraging shared dynamic information across domains for more accurate prediction.

\subsection{Granger Causal Discovery}
Several works have been raised to infer causal structures for time series data based on Granger causality \cite{diks2006new,granger1969investigating,marcinkevivcs2021interpretable,lowe2020amortized,lin2024root,gong2023causal}. Previously, several researchers used the vector autoregressive (\textbf{VAR}) model \cite{lozano2009grouped,hamilton2020time} with the sparsity constraint like Lasso or Group Lasso \cite{yuan2006model,tibshirani1996regression} to learn Granger causality. Recently, several works have inferred Granger causality with the aid of neural networks. For instance, Tank et al. \cite{tank2021neural} developed a neural network-based autoregressive model with sparsity penalties applied to network weights. Inspired by the interpretability of self-explaining neural networks, Marcinkevivcs et al. \cite{marcinkevivcs2021interpretable} introduced a generalized vector autoregression model to learn Granger causality. Li et al. \cite{li2023transferable} considered the Granger causal structure as latent variables. Cheng et al. \cite{cheng2023cuts,cheng2024cuts+} proposed a neural Granger causal discovery algorithm to discover Granger causality from irregular time series data. Lin et al. \cite{lin2024root} used a neural architecture with contrastive learning to learn Granger causality. However, these methods usually consider the Granger causal structures over low-dimension observed time series data, which can hardly address the time series data with high dimension or latent causal relationships. To address this limitation, we identify the latent variables and infer the latent Granger causal structures behind high-dimensional time series data.

\subsection{Identifiability of Generative Model}
To achieve causal representation \cite{rajendran2024learning,mansouri2023object,wendong2024causal} for time series data, many researchers leverage Independent Component Analysis (ICA) to recover latent variables with identifiability guarantees \cite{yao2023multi,scholkopf2021toward,Liu2023CausalTriplet,gresele2020incomplete}. Conventional methods typically assume a linear mixing function from the latent variables to the observed variables \cite{comon1994independent,hyvarinen2013independent,lee1998independent,zhang2007kernel}. To relax the linear assumption, researchers achieve the identifiability of latent variables in nonlinear ICA by using different types of assumptions, such as auxiliary variables or sparse generation processes \cite{zheng2022identifiability,hyvarinen1999nonlinear,hyvarinen2023identifiability,khemakhem2020ice,li2023identifying}. Aapo et al. \cite{hyvarinen2017nonlinear} first achieved identifiability for methods employing auxiliary variables by assuming the latent sources follow an exponential family distribution and introducing auxiliary variables, such as domain indices, time indices, and class labels. To further relax the exponential family assumption, Zhang et al. \cite{kong2022partial,xie2022multi, kong2023identification,yan2023counterfactual, xie2022multi} proposed component-wise identifiability results for nonlinear ICA, requiring $2n+1$ auxiliary variables for $n$ latent variables. To seek identifiability in an unsupervised manner, researchers employed the assumption of structural sparsity to achieve identifiability \cite{ng2024identifiability,lachapelle2022disentanglement,zheng2022identifiability, xu2024sparsity}. Recently, Zhang et al. \cite{zhang2024causal} achieved identifiability under distribution shift by leveraging the sparse structures of latent variables. Li et al. \cite{li2024identification} further employed sparse causal influence to achieve identifiability for time series data with instantaneous dependency.

% \clearpage

\section{PRELIMINARIES}

In this section, we first describe the data generation process from multiple distributions under latent Granger Causality. Sequentially, we further provide the definition of generalized causal conditional shift as well as identifiability of latent causal process.

\subsection{Data Generation Process from Multiple Distributions under Latent Granger Causality}
To illustrate how we address time series domain adaptation with latent causality alignment, we first consider the data generation process under latent Granger causality. Specifically, we first let $X=(\rvx_1,\cdots,\rvx_{\tau}, \cdots, \rvx_t)$ be multivariate time series with $t$ time steps. Each $\rvx_{\tau} \in \mathbb{R}^m$ is generated from latent variables $\rvz_{\tau} \in \mathbb{R}^n, m \gg n$ via an invertible nonlinear
mixing function $\rvg$ as follows:
\begin{equation}
% \small
\label{equ:g1}
    \rvx_{t}=\rvg(\rvz_{t}).
\end{equation}
Moreover, the $i$-th dimension of the latent variables $\rvz_{t,i}$ is generated through the latent causal process, which is assumed to be influenced by the time-delayed parent variables $\text{Pa}(\rvz_{t,i})$ and the different domains, simultaneously. Formally, this relationship can be expressed using
a structural equation model as follows:
\begin{equation}
% \small
\label{equ:g2}
\begin{split}
    z_{t,i}= f_i(\text{Pa}(z_{t,i}), \rvu ,\epsilon_{t,i}) \quad \text{with} \quad \epsilon_{t,i}\sim p_{\epsilon_{i}},
    % z_{t-\tau,k}
\end{split}
\end{equation}
where $\epsilon_{\tau,i}$ denotes the temporally and spatially
independent noise extracted from a distribution $p_{\epsilon_{i}}$, and $\rvu$ denotes the domain index that could be either the source ($S$) or target ($T$) domains.

\subsection{Generalized Causal Conditional Shift Assumption}

Inspired by the domain-invariant causal mechanism between the source and target domains, we generalize the causal conditional shift assumption \cite{li2023transferable} from observed variables to latent variables, which is discussed next.
\begin{assumption}
(\textbf{Generalized Causal Conditional Shift}) Given latent causal structure $\mA$ and latent variables $\rvz_1,\cdots,\rvz_{t}, \rvz_{t+1}$, we assume that the conditional distribution $P(\rvz_{t+1}|\rvz_1,\cdots,\rvz_{t})$ varies with different domains, while the latent causal structures are stable across different domains. This can be formalized as follows:

\begin{equation}
\begin{split}
    P(\rvz_{t+1}|\rvz_1,\cdots,\rvz_{t},S)&\neq P(\rvz_{t+1}|\rvz_1,\cdots,\rvz_{t},T)\\
    A_S&=A_T,
\end{split}
\end{equation}
where $A_S$ and $A_T$ are the causal structures over latent variables from the source and target domains, respectively.
\end{assumption}

Based on the aforementioned assumption, we consider both the forecasting and the classification time series domain adaptation tasks. Starting with the time series forecasting, the label $Y=(\rvx_{t+1},\cdots,\rvx_{t+\rho})$ denotes the values of the future $\rho$ time steps. On the other hand, $Y$ denotes the discrete class labels for the classification tasks, which are generated as follows:
\begin{equation}
    Y=C(\rvz_1,\cdots,\rvz_t),
\end{equation}
where $C$ is the labeling function. Our objective in this paper is to use the labeled source data and unlabeled target data to identify the target joint distribution.

\subsection{Identifiability of Target Joint Distribution}
In this subsection, we discuss the mechanism of identifying the target joint distribution. By introducing the latent variables and combining the data generation process, we can factorize the target joint distribution as shown in Equation (\ref{eq: prob_int}).

\begin{equation}
\label{eq: prob_int}
\begin{split}
    &P(X,Y|T)=\int_{Z}P(X|Z)P(Y|Z)P(Z|T)dZ,
\end{split}
\end{equation}
where $Z=(\rvz_1,\cdots,\rvz_{t})$ denotes the latent causal process. According to the aforementioned equation, we can identify the target joint distribution by modeling the conditional distribution of observed data given latent variables and identifying the latent causal process under theoretical guarantees. Specifically, if estimated latent processes can be uniquely identified up to permutation and component-wise invertible transformations, then the latent causal relationships are also identifiable. This can be regarded to the conditional independence relations that fully characterize causal relationships within a causal system, assuming the absence of causal confounders within latent causal processes.

\begin{definition}[Identifiable Latent Causal Process]
\label{def: identifibility}
Let $\mathbf{X} = \{\mathbf{x}_1, \dots, \mathbf{x}_l\}$ represent a sequence of observed variables generated according to the true latent causal processes characterized by $(f_i, p(\epsilon_i), \mathbf{g})$, as described in Equations (\ref{equ:g1}) and (\ref{equ:g2}). A learned generative model $(\hat{f}_i, \hat{p}(\epsilon_i), \hat{\mathbf{g}})$ is said to be observationally equivalent to $(f_i, p(\epsilon_i), \mathbf{g})$ if the distribution $p_{\hat{f}_i, \hat{p}(\epsilon_i), \hat{\mathbf{g}}}(\{\mathbf{x}\}_{t=1}^l)$ produced by the model matches the true data distribution $p_{f_i, p(\epsilon_i), \mathbf{g}}(\{\mathbf{x}\}_{t=1}^l)$ for all possible values of $\mathbf{x}_t$. We say that the latent causal processes are identifiable if such observational equivalence implies that the generative model can be transformed into the true generative process by means of a permutation $\pi$ and a component-wise invertible transformation $\mathcal{T}$:
\begin{equation}
% \small
\label{eq:iden}
     p_{\hat{f}_i, \hat{p}_{\epsilon_i},\hat{\mathbf{g}}}\big(\{\mathbf{x}\}_{t=1}^T\big) = p_{f_i, p_{\epsilon_i}, \mathbf{g}} \big(\{\mathbf{x}\}_{t=1}^T\big) 
     \ \ \Rightarrow\ \   \hat{\mathbf{g}} = \mathbf{g}  \circ  \pi  \circ \mathcal{T}.
\end{equation}
\end{definition} 

Upon the identification of the mixing process, the latent variables will be identified up to a permutation and a component-wise invertible transformation:
\begin{align}
% \small
    \mathbf{\hat{z}}_t 
    &= \mathbf{\hat{g}}^{-1}(\mathbf{x}_t) 
    = \big(\mathcal{T}^{-1} \circ \pi^{-1} \circ \mathbf{g}^{-1}\big)(\mathbf{x}_t) \\
 \nonumber
    &= \big(\mathcal{T}^{-1} \circ \pi^{-1} \big)\big(\mathbf{g}^{-1}(\mathbf{x}_t)\big) 
    = \big(\mathcal{T}^{-1} \circ \pi^{-1} \big)(\mathbf{z}_t).
\end{align}

% Furthermore, once the latent variables are identified, the entire causal structure among them can be fully determined, as interventions can be readily applied to any individual latent variable

\section{\textcolor{black}{Identifying Latent Causal Process}}
Based on the definition of the identification of latent causal processes, we demonstrate how to identify the latent causal process within the context of a sparse latent process. Specifically, we first utilize the connection between conditional independence and cross derivatives \cite{lin1997factorizing}, along with the sufficient variability of temporal data, to uncover the relationships between the estimated and true latent variables. Furthermore, we establish the identifiability of the latent variables by imposing constraints on sparse causal influences, as shown in Lemma \ref{Th1}.

% In this section, we first reveal the relationships between ground-truth and estimated latent variables. Without loss of generality, we consider a simplified case with $\tau=1$, implying that any $z_{t, i}$ is subject to time-delayed effects of at most one step and instantaneous effects.

\syf{
We demonstrate that given certain assumptions, the latent variables are also identifiable. To integrate contextual information for identifying latent variables $\rvz_t$, we consider latent variables across $L$ consecutive timestamps, including $\rvz_t$. For simplicity, we analyze the case where the sequence length is 2 (i.e., $L=2$) and the time lag is 1. 
% The general case involving multiple lags and varying sequence lengths is discussed in Appendix A%\ref{app: Extension to Multiple Lags and Sequence Lengths}.
}

\begin{lemma}
\label{Th1}
\textbf{(Identifiability of Temporally Latent Process)} \cite{yao2022temporally} Suppose there exists invertible function $\hat{\mathbf{g}}$ that maps $\mathbf{x}_t$ to $\hat{\mathbf{z}}_t$, i.e., $\hat{\mathbf{z}}_t=\hat{\mathbf{g}}(\mathbf{x}_t)$, such that the components of  $\hat{\mathbf{z}}_t$ are mutually independent conditional on $\hat{\mathbf{z}}_{t-1}$.Let
\begin{equation}
\label{eq: independent_condition}
 % \small
 \begin{split}
    \mathbf{v}_{t,k} =
    \Big(\frac{\partial^{2}\log p(z_{t,k}|\mathbf{z}_{t-1}) }{\partial z_{t,k}\partial z_{t-1,1}},\frac{\partial^{2}\log p(z_{t,k}|\mathbf{z}_{t-1})}{\partial z_{t,k}\partial z_{t-1,2}},...,\\
        \frac{\partial^{2}\log p(z_{t,k}|\mathbf{z}_{t-1})}{\partial z_{t,k}\partial z_{t-1,n}}\Big)^{\mathsf{T}},\\
    \mathring{\mathbf{v}}_{t,k}=
\Big(\frac{\partial^{3}\log p(z_{t,k}|\mathbf{z}_{t-1})}{\partial z_{t,k}^{2}\partial z_{t-1,1}},\frac{\partial^{3}\log p(z_{t,k}|\mathbf{z}_{t-1})}{\partial z_{t,k}^{2}\partial z_{t-1,2}},...,\\ 
    \frac{\partial^{3}\log p(z_{t,k}|\mathbf{z}_{t-1})}{\partial z_{t,k}^{2}\partial z_{t-1,n}}\Big)^{\mathsf{T}}.
    \end{split}
\end{equation}
If for each value of $\mathbf{z}_t,\mathbf{v}_{t,1},\mathring{\mathbf{v}}_{t,1},\mathbf{v}_{t,2},\mathring{\mathbf{v}}_{t,2},...,,\mathbf{v}_{t,n},\mathring{\mathbf{v}}_{t,n}$, as 2n vector function in $z_{t-1,1},z_{t-1,2},...,z_{t-1,n}$, are linearly independent, then $\mathbf{z}_t$ must be an invertible, component-wise transformation of a permuted version of $\hat{\mathbf{z}}_t$.
\end{lemma}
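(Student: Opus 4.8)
The plan is to study the indeterminacy map $\mathbf{h} := \hat{\mathbf{g}} \circ \mathbf{g}^{-1}$, which by construction is invertible and satisfies $\hat{\mathbf{z}}_t = \mathbf{h}(\mathbf{z}_t)$ at each time step; since $\mathbf{g}$ and $\hat{\mathbf{g}}$ act instantaneously on $\mathbf{x}_t$, the map $\mathbf{h}$ does \emph{not} depend on $\mathbf{z}_{t-1}$. Proving the lemma is then equivalent to showing that the Jacobian of $\mathbf{h}^{-1}$ is a generalized permutation matrix (exactly one nonzero entry per row), since such a matrix encodes precisely a permutation composed with a component-wise invertible transformation. The first step is to convert the two conditional-independence facts into derivative identities. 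The true generative process in Eq. (\ref{equ:g2}) has mutually independent noise, so $\log p(\mathbf{z}_t\mid\mathbf{z}_{t-1}) = \sum_k \log p(z_{t,k}\mid\mathbf{z}_{t-1})$, while the hypothesis that the components of $\hat{\mathbf{z}}_t$ are conditionally independent given $\hat{\mathbf{z}}_{t-1}$ gives $\partial^2 \log p(\hat{\mathbf{z}}_t\mid\hat{\mathbf{z}}_{t-1})/\partial \hat z_{t,i}\,\partial \hat z_{t,j} = 0$ for all $i\neq j$.

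Next I would push the estimated conditional density through the change of variables $\hat{\mathbf{z}}_t = \mathbf{h}(\mathbf{z}_t)$. Writing $J_{kj} = \partial z_{t,k}/\partial \hat z_{t,j}$ for the entries of the Jacobian of $\mathbf{h}^{-1}$ and $s(\mathbf{z}_t) = \log|\det J_{\mathbf{h}}|$, the change-of-variables formula yields $\log p(\hat{\mathbf{z}}_t\mid\hat{\mathbf{z}}_{t-1}) = \sum_k \log p(z_{t,k}\mid\mathbf{z}_{t-1}) - s(\mathbf{z}_t)$, where crucially $s$ and the entries $J_{kj}$ depend only on $\mathbf{z}_t$. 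Differentiating the conditional-independence identity twice in the estimated coordinates produces, for each $i\neq j$, a curvature term $\sum_k (\partial^2_{z_{t,k}}\log p)\,J_{ki}J_{kj}$, a first-order term $\sum_k (\partial_{z_{t,k}}\log p)\,\partial_{\hat z_{t,i}}J_{kj}$, and the Jacobian contribution $\partial_{\hat z_{t,i}}\partial_{\hat z_{t,j}} s$.

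The decisive step, where the hypothesis of the lemma is consumed, is to differentiate this identity once more with respect to each past coordinate $z_{t-1,l}$. Because $s(\mathbf{z}_t)$ and the Jacobian entries $J_{kj}$ carry no dependence on $\mathbf{z}_{t-1}$, the entire Jacobian block is annihilated, leaving $\sum_k [ J_{ki}J_{kj}\,\mathring{\mathbf{v}}_{t,k} + (\partial_{\hat z_{t,i}}J_{kj})\,\mathbf{v}_{t,k} ] = \mathbf{0}$ for every $i\neq j$, where $\mathbf{v}_{t,k}$ and $\mathring{\mathbf{v}}_{t,k}$ are exactly the vectors defined in Eq. (\ref{eq: independent_condition}) once the index $l$ is stacked. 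This is a vanishing linear combination of the $2n$ functions $\{\mathbf{v}_{t,k},\mathring{\mathbf{v}}_{t,k}\}_{k=1}^n$ of $\mathbf{z}_{t-1}$ whose coefficients $J_{ki}J_{kj}$ and $\partial_{\hat z_{t,i}}J_{kj}$ depend only on $\mathbf{z}_t$; hence the assumed linear independence forces $J_{ki}J_{kj}=0$ and $\partial_{\hat z_{t,i}}J_{kj}=0$ for all $i\neq j$ and all $k$.

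Finally, $J_{ki}J_{kj}=0$ for all $i\neq j$ states that each row $k$ of the Jacobian of $\mathbf{h}^{-1}$ has at most one nonzero entry; since $\mathbf{h}$ is a diffeomorphism this Jacobian is nonsingular, so each row has exactly one nonzero entry, making it a generalized permutation matrix. Therefore $\mathbf{h}^{-1}$ is a permutation composed with a component-wise map, and invertibility of $\mathbf{h}$ renders each component invertible, which is the claim. I expect the main obstacle to be the bookkeeping across the two rounds of differentiation — specifically, verifying rigorously that $s$ and the $J_{kj}$ are independent of $\mathbf{z}_{t-1}$ so that the third-derivative step cleanly isolates the $2n$ vectors, and then arguing that the support of the resulting generalized permutation matrix is constant in $\mathbf{z}_t$ rather than switching between regions of the latent space.
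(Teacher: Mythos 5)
Your proposal follows essentially the same route as the paper's own proof in Appendix A: the same indeterminacy map (the paper's $\mathbf{h}=\mathbf{g}^{-1}\circ\hat{\mathbf{g}}^{-1}$ is exactly your $\mathbf{h}^{-1}$, its Jacobian $\mathbf{H}_t$ matching your $J$), the same change-of-variables identity for $\log p(\hat{\mathbf{z}}_t\mid\hat{\mathbf{z}}_{t-1})$, the same two rounds of differentiation in which the extra derivative with respect to $z_{t-1,l}$ annihilates the Jacobian-determinant term, and the same appeal to linear independence of $\{\mathbf{v}_{t,k},\mathring{\mathbf{v}}_{t,k}\}$ to force $\mathbf{H}_{tki}\mathbf{H}_{tkj}=0$ for $i\neq j$, whence each row of the Jacobian has exactly one nonzero entry. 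The final concern you flag, that the permutation pattern should not switch across regions of the latent space, is likewise left implicit in the paper's proof, so your argument is at the same level of rigor as the original.
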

\textbf{Proof Sketch.} The proof can be found in Appendix A.1. First, we establish a bijective transformation between the ground truth $\rvz_t$ and the estimated $\hat{\rvz}_t$ to connect them. Sequentially, we leverage the historical information to construct a full-rank linear system, where the only solution of $\frac{\partial z_{t,i}}{\partial \hat{z}_{t,j}} \cdot \frac{\partial z_{t,i}}{\partial \hat{z}_{t,k}}$ is zero. Since the Jacobian of $h$ is invertible, for each $z_{t,i}$, there exists a $h_i$ such that $z_{t,i}=h(\hat{z}_{t,j})$, implying that $z_{t,i}$ is component-wise identifiable.

By identifying the latent variables, they can be considered as observed variables up to a permutation and invertible transformation. As a result, Granger Causality among $\rvz_t$ and $\rvz_{t-1}$ can be identified by using sparsity constraint on the relationships among latent variables, which is shown in Proposition \ref{prop1} (also refer to the proof in Appendix A.3 for more details).%\ref{app: Granger non-causality of Latent Variables}.

\begin{prop}
\label{prop1}
\textbf{(Identification of Granger Causality among latent variables)} Suppose the estimated transition function $f_i$ accurately models the relationship between $\rvz_{t}$ and $\rvz_{t-1}$. Let the ground truth Granger Causal structure be represented as $\mathcal{G}=(V, E_V)$ with the maximum lag of 1, where $V$ and $E_V$ denote the nodes and edges, respectively. If the data are generated according to Equation (\ref{equ:g1}) and (\ref{equ:g2}), and sparsity is enforced among the latent variables, then $z_{t-1,j}\rightarrow z_{t,i} \notin E_V$ if and only if $\frac{\partial z_{t,i}}{\partial z_{t-1,j}}=0$. 
\end{prop}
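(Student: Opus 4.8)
The plan is to reduce the statement about Granger edges to a statement about the functional dependence of the transition map $f_i$, and then to tie that functional dependence to the vanishing of the Jacobian entry $\frac{\partial z_{t,i}}{\partial z_{t-1,j}}$, using the identifiability result of Lemma~\ref{Th1} together with the enforced sparsity. First I would recall that, under the structural equation model in Equation~(\ref{equ:g2}) with mutually independent noise $\epsilon_{t,i}$, the edge $z_{t-1,j}\to z_{t,i}$ belongs to $E_V$ precisely when $z_{t-1,j}\in\text{Pa}(z_{t,i})$, i.e.\ when $f_i$ genuinely takes $z_{t-1,j}$ as an argument. Because the noise is independent across coordinates and time, this functional dependence is equivalent to the conditional law $p(z_{t,i}\mid\rvz_{t-1})$ being non-constant in $z_{t-1,j}$, which is exactly the Granger non-causality characterization of an absent edge.

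Next I would use Lemma~\ref{Th1} to transfer the argument from the true latent process to the identified one. The identifiability guarantee states that $\hat{\rvz}_t$ equals $\rvz_t$ up to a permutation $\pi$ and a \emph{component-wise} invertible transformation $\mathcal{T}$. Since $\mathcal{T}$ acts coordinate by coordinate, its Jacobian is diagonal, so by the chain rule the estimated entry $\frac{\partial \hat{z}_{t,i}}{\partial \hat{z}_{t-1,j}}$ equals the true entry $\frac{\partial z_{t,\pi(i)}}{\partial z_{t-1,\pi(j)}}$ multiplied by two nonzero scalar factors (the derivatives of the invertible scalar maps). Consequently the support, i.e.\ the zero/nonzero pattern, of the transition Jacobian is preserved under identification, and it suffices to establish the equivalence for the true process.

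The forward direction is then immediate: if $z_{t-1,j}\to z_{t,i}\notin E_V$ then $z_{t-1,j}\notin\text{Pa}(z_{t,i})$, so $f_i$ does not depend on $z_{t-1,j}$ and $\frac{\partial z_{t,i}}{\partial z_{t-1,j}}=\frac{\partial f_i}{\partial z_{t-1,j}}\equiv 0$. For the converse I would argue by contraposition, and here the sparsity constraint does the essential work. Among all transition functions that reproduce the observed conditional distribution (which the estimated $f_i$ does by hypothesis), the sparsity penalty selects the one with the minimal set of active inputs; I would show this minimal support coincides with the true parent set $\text{Pa}(z_{t,i})$. Hence a genuinely present edge cannot be represented by a learned function whose $z_{t-1,j}$-derivative vanishes identically, so $\frac{\partial z_{t,i}}{\partial z_{t-1,j}}=0$ forces $z_{t-1,j}\notin\text{Pa}(z_{t,i})$ and therefore no edge.

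The main obstacle is this converse direction, specifically ruling out spurious cancellation: a function $f_i$ that functionally involves $z_{t-1,j}$ yet yields a conditional density independent of it, or whose Jacobian entry happens to vanish on a non-negligible set. The crux is to make precise that the enforced sparsity recovers the true minimal support, so that the learned model attains a zero derivative exactly on the non-parents, and to interpret $\frac{\partial z_{t,i}}{\partial z_{t-1,j}}=0$ as identical vanishing of the function rather than pointwise vanishing. A secondary technical point is confirming that the component-wise (diagonal) nature of $\mathcal{T}$ from Lemma~\ref{Th1} indeed leaves the Jacobian support invariant, which I would verify by the explicit chain-rule expansion sketched above.
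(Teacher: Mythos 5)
Your forward direction and your chain-rule observation that the component-wise transformation $\mathcal{T}$ in Lemma~\ref{Th1} preserves the support of the transition Jacobian up to permutation are both sound; the latter is actually more explicit than the paper, which simply declares that, once identified, the latent variables ``can be considered as observed variables up to a permutation and invertible transformation.'' The gap is in your converse. You reduce ``$\frac{\partial z_{t,i}}{\partial z_{t-1,j}}=0 \Rightarrow$ no edge'' to the claim that the sparsity penalty selects, among all transition functions reproducing the conditional distribution, the one whose active-input set equals the true parent set---and you explicitly defer proving this (``I would show this minimal support coincides with the true parent set''). That claim is essentially Proposition~\ref{prop1} itself restated: identifying the estimated support with the true parents is exactly what is to be proved, so as written the argument is circular, and the failure mode you yourself flag (a function that takes $z_{t-1,j}$ as an argument while its output, or the induced conditional law, is invariant to it) is never actually excluded.

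The paper sidesteps this entirely because its notion of Granger non-causality (Definition~\ref{def: Granger non-causality of Latent Variables} in the appendix) is \emph{functional}, not distributional: the edge $z_{t-1,j}\to z_{t,i}$ is absent, by definition, iff the transition map is invariant to $z_{t-1,j}$ as an argument. With that definition, and with the hypothesis that the estimated $f_i$ accurately models the true relationship, both directions collapse to the elementary equivalence ``a differentiable function is constant in one coordinate iff its partial derivative in that coordinate vanishes identically''; the paper's appendix proof is exactly this two-line contradiction argument, and sparsity plays no role in the proof proper (it serves, at the algorithmic level, to make the ``accurately models'' hypothesis attainable by suppressing spurious inputs). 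So the repair is not to establish your minimal-support lemma but to drop it: once you adopt the functional definition of the edge set $E_V$, your forward argument dualizes immediately to give the converse.
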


\section{Latent Causality Alignment Model}
Based on the theoretical results, we propose the latent causality alignment (\textbf{LCA}), shown in Figure \ref{fig:model}, for time series domain adaptation. This figure shows a variational-inference-based neural architecture to model time series data, a prior estimation network, and a downstream neural forecaster for different downstream tasks.
\subsection{Temporally Variational Inference Architecture}
\begin{figure*}
	\centering
	\includegraphics[width=1.9\columnwidth]{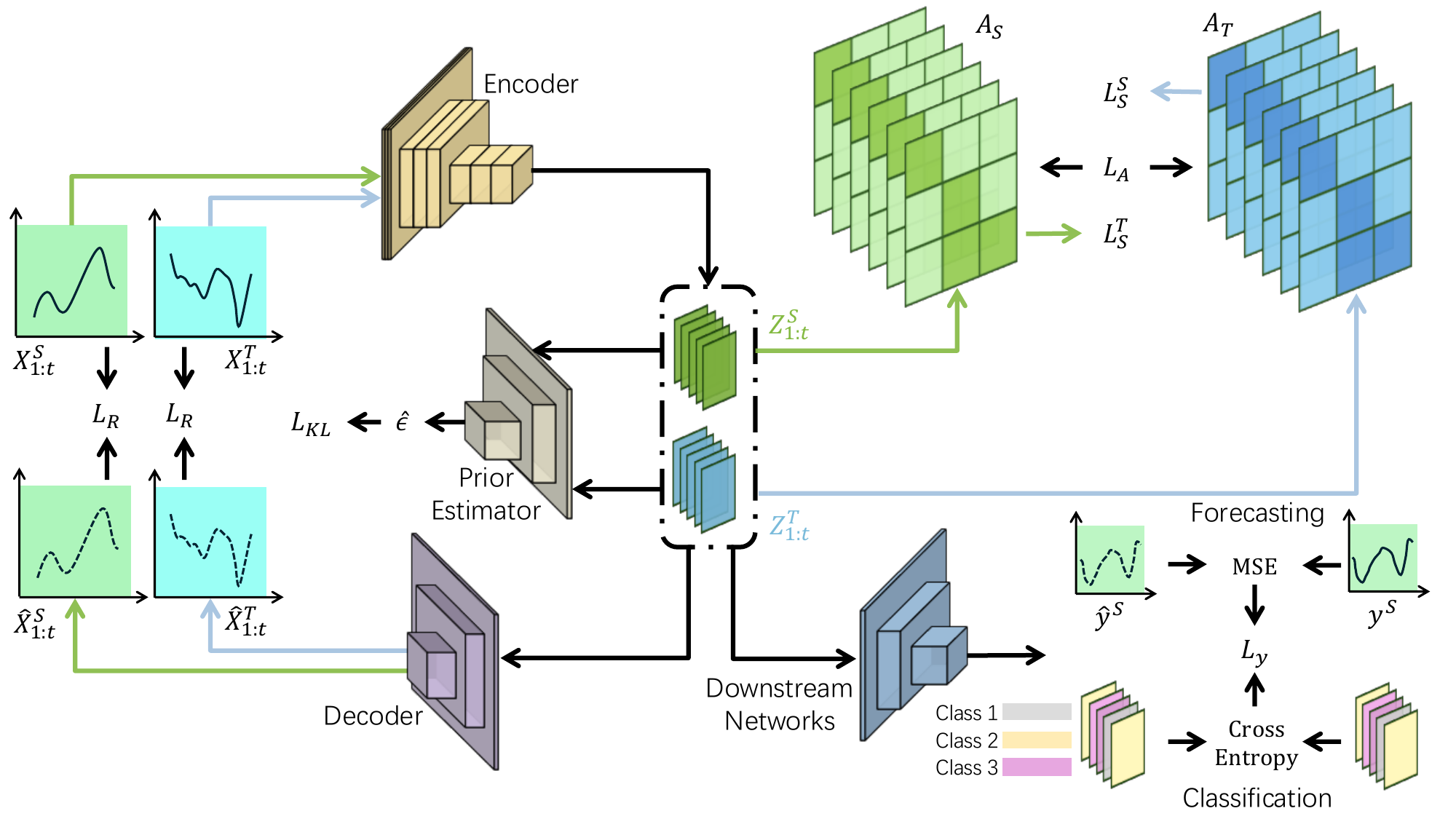}
    \caption{Model architecture: Blue arrows indicate the flow of source data, while purple arrows represent the flow of target data. The loss function is highlighted in bold as \( L \). Subfigures (a) and (b) depict the architectures for prediction and classification tasks, respectively.}
    \label{fig:model}
\end{figure*}
We first derive the evidence lower bound (ELBO) to model the time series data, as follows: 
\begin{equation}
\label{eq:elbop}
\begin{split}
&\ln P(X, Y) = \ln {P(\rvx_{1:t}, Y)}=\ln\frac{P(\rvx_{1:t}, Y, \rvz_{1:t})}{P(\rvz_{1:t}|\rvx_{1:t}, Y)}\\\geq&\underbrace{\mathbb{E}_{Q(\rvz_{1:t}|\rvx_{1:t})}\ln P(\rvx_{1:t}|\rvz_{1:t})}_{L_{R}} +\underbrace{\mathbb{E}_{Q(\rvz_{1:t})}\ln P(Y|\rvz_{1:t})}_{L_{Y}} \\
&\underbrace{
-D_{KL}(Q(\rvz_{1:t}|\rvx_{1:t})||P(\rvz_{1:t}))
}_{L_{KL}} ,
\end{split}
\end{equation}
where \(Q(\rvz_{1:t}|\rvx_{1:t})\) and \(P(\rvx_{1:t}|\rvz_{1:t})\) are used to approximate the prior distribution of latent variables and reconstruct the observations, respectively. Technologically, we consider $Q(\rvz_{1:t}|\rvx_{1:t})$ and $P(\rvx_{1:t}|\rvz_{1:t})$ as the encoder and decoder networks, respectively, which can be formalized as follows:
\begin{equation}
    \hat{\rvz}_{1:t} = \psi(\rvx_{1:t}) \quad \hat{\rvx}_{1:t} = \phi(\rvx_{1:t}),
\end{equation}
where $\psi$ and $\phi$ denote the encoder and decoder respectively.
\subsection{Prior Estimation Networks} \label{sec:prior}
To estimate the prior distribution $P(\rvz_{1:t})$, we propose the prior estimation networks. Specifically, we first let ${r_i}$ be a set of learned inverse transition functions that receive the estimated latent variables \textcolor{black}{with the superscript symbol $\hat{}$} as input, and use it to estimate the noise term $\hat{\epsilon}_i$, i.e., $\hat{\epsilon}_{t,i}=r_i(\hat{\rvz}_{t,i},\hat{\rvz}_{t-1})$, where each $r_i$ is implemented by Multi-layer Perceptron networks (MLPs). Sequentially, we devise a transformation $\kappa := \{\hat{\rvz}_{t-1},\hat{\rvz}_t\}\rightarrow\{\hat{\rvz}_{t-1},\hat{\bm{\epsilon}}_{t}\}$, whose Jacobian can be formalized as ${\mathbf{J}_{\kappa}=
    \begin{pmatrix}
        \mathbb{I}&0\\
        \mathbf{J}_d & \mathbf{J}_e
    \end{pmatrix}}$, where $\mathbf{J}_d(i,j) = \frac{\partial r_i}{\partial \hat{z}_{t-1,j}}
$ and $\mathbf{J}_e\!\!=\!\!\text{diag}\!\left(\frac{\partial r_i}{\partial \hat{z}_{t,i}}\!\right)$. Hence we have Equation (\ref{eq:pri_1}) via the change of variables formula. 
\begin{equation}
\label{eq:pri_1}
\small
    \log p(\hat{\rvz}_t, \hat{\rvz}_{t-1})=\log p(\hat{\rvz}_{t-1},\epsilon_t) + \log |\frac{\partial r_i}{\partial z_{t,i}}|.
\end{equation}
According to the generation process, the noise term \(\epsilon_{t,i}\) is independent of \(\rvz_{t-1}\). Therefore, we can impose independence on the estimated noise term \(\hat{\epsilon}_{t,i}\). Consequently, Equation~(\ref{eq:pri_1}) can be further expressed as:
\begin{equation}
\small
    \log p(\hat{\rvz}_t \mid \{\hat{\rvz}_{t-\tau}\}) = \sum_{i=1}^{n} \log p(\hat{\epsilon}_{t,i}) + \sum_{i=1}^{n} \log \left| \frac{\partial r_i}{\partial \hat{z}_{t,i}} \right|.
\end{equation}
Assuming a \(\tau\)-order Markov process, the prior \( p(\hat{\rvz}_{1:T}) \) can be expressed as \( p(\hat{z}_{1:\tau}) \prod_{t=\tau+1}^{T} p(\hat{\rvz}_{t} \mid \{ \hat{\rvz}_{t-\tau} \}) \). Hence, the log-likelihood \( \log p(\hat{\rvz}_{1:T}) \) can be estimated as:
\begin{equation}
\small
    \log p(\hat{\rvz}_{1:T}) = \log p(\hat{z}_{1:\tau}) + \sum_{t=\tau+1}^T \left( \sum_{i=1}^n \log p(\hat{\epsilon}_{t,i}) + \sum_{i=1}^n \log \left| \frac{\partial r_i}{\partial \hat{z}_{t,i}} \right| \right).
\end{equation}
Here, we assume that the noise term \( p(\hat{\epsilon}_{\tau,i}) \) and the initial latent variables \( p(\hat{\rvz}_{1:\tau}) \) follow Gaussian distributions. For \(\tau = 1\), this simplifies to:
\begin{equation}
\small
    \log p(\hat{\rvz}_{1:T}) = \log p(\hat{z}_{1}) + \sum_{t=2}^T \left( \sum_{i=1}^n \log p(\hat{\epsilon}_{t,i}) + \sum_{i=1}^n \log \left| \frac{\partial r_i}{\partial \hat{z}_{t,i}} \right| \right).
\end{equation}
Similarly, the distribution \( p(\hat{\rvz}_{t+1:T} \mid \hat{\rvz}_{1:t}) \) can be estimated using analogous methods. For further details on the derivation of the prior, please refer to Appendix B.%\ref{app:prior_derivation}

\subsection{Sparsity Constraint for Latent Granger Causality} 
\begin{figure}
	\centering
	\includegraphics[width=0.9\columnwidth]{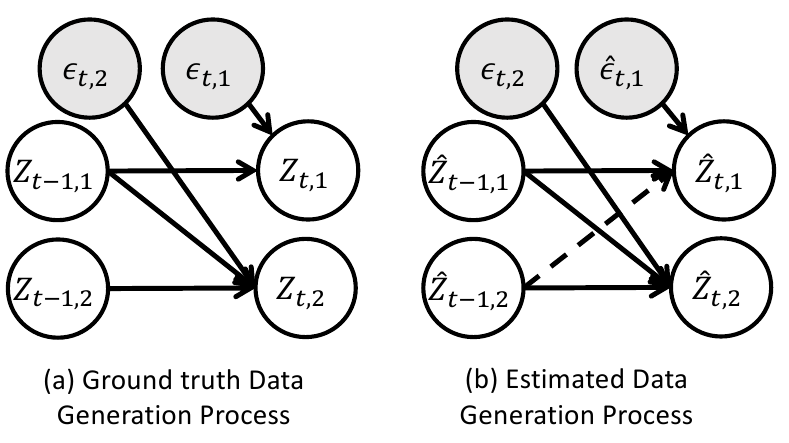}
    \caption{To describe the relationship between partial derivatives and the existence of edges clearly, we assume that the data follow a simple linear generation process.}
    \label{fig: noise_meaning}
\end{figure}
Based on the theoretical results, we can identify the latent variables by using the variational-influenced-based architecture and prior estimation networks. However, without any further constraints, it is hard for us to infer the causal structures over latent variables. To address this problem, we propose the sparsity constraint on partial derivatives regarding independent noise and latent variables for the latent Granger Causality. To provide a clearer understanding of its implications, we use a straightforward example with ground truth and estimated generation processes as shown in Figure \ref{fig: noise_meaning}, where the boiled and dashed arrows denote the ground truth and spurious relationships. According to Figure \ref{fig: noise_meaning}, $z_{t,1}$ and $\hat{z}_{t,1}$ are generated as follows:
\begin{equation}
\begin{split}
    z_{t,1}&=f_1(z_{t-1,1},\epsilon_{t,1}) \\
    \hat{z}_{t,1}&=f_1(\hat{z}_{t-1,1}, \hat{z}_{t-1,2},\hat{\epsilon}_{t,1}),
\end{split}
\end{equation}
where the estimated generation process of $\hat{z}_{t,1}$ includes the spurious dependence from $\hat{z}_{t-1,2}$. To remove these spurious dependencies, we find that the partial derivatives of ground truth generation process $\frac{\partial \epsilon_{t,1}}{\partial z_{t-1,2}} = 0$ and $\frac{\partial \epsilon_{t,1}}{\partial z_{t-1,1}} \neq 0$, meaning that the $\mJ$ can provide an intuitive representation of the causal structures among the latent variables, as it quantifies the influence of \( \hat{\rvz}_{t-1} \) on \( \hat{z}_{t,i} \). Therefore, we can apply sparsity constraint on the partial derivatives regarding the estimated noise term and latent variables, i.e., $\frac{\partial \hat{\epsilon}_{t,1}}{\partial \hat{z}_{t-1,2}}$, to remove the spurious dependencies of latent variables. As a result, we propose to employ the $\mathcal{L}_1$ regularization on the  Jacobian matrix $\mJ$ as shown in Equation (\ref{equ:sparsity}).
\begin{equation}
% \small
\label{equ:sparsity}
    L_{S} = ||\mathbf{J}||_1,
\end{equation}
where $||*||_1$ denotes the $\mathcal{L}_1$ Norm of a matrix. 

\subsection{Latent Causality Alignment Constraint}\label{sec:align}
% 由于源域和目标域的隐结构是稳定不变的，我们需要对其源域和目标域的隐结构进行对齐。但是和观测变量上的结构对齐不同，隐变量的对齐是一件困难的事情。原因有2，首先隐变量之间的结构是隐式的，不能像GCA那样直接提取出因果结构进行对齐，另外虽然梯度一定可以反映隐变量之间的因果结构，但是直接对齐梯度会导致次优化，导致下游任务的性能变差。为了解决这两个问题，本文提出了Latent Causality Alignment Constraint

Since the latent structures are stable across different domains, we need to align the latent structures of the source domain and the target domain. However, unlike the structural alignment of the observed variables \cite{li2023transferable}, the alignment of latent variables can be a more challenging task for two main reasons. First, the structure between latent variables is implicit, and the causal structure cannot be directly extracted for alignment as in GCA \cite{li2023transferable}. In addition, although the partial derivatives regarding estimated noise and latent variables can reflect the causal structure over latent variables, directly aligning the gradient can affect the correct gradient descent direction and result in suboptimal performance downstream tasks. To overcome these challenges, we propose the latent causality alignment constraint.

% 首先，我们先通过一个阈值确定源域和目标域的结构，然后通过异或操作获取源域和目标域之间的差异，然后只需要约束差异的比分即可在保留结构
Specifically, we choose a threshold $u$ to determine the causal relations among the latent variables. For instance, if $\mathbf{J}^*_{i,j}>u$, then there is an edge from $z_{t-1, i}$ to $z_{t,j}$, otherwise, no edge is present. Formally, we can obtain the estimated causal relationships of the source or target domains (the superscript * shows the source or target domains) as follows: 
\begin{equation}  
\hat{\mathbf{J}}^*_{i,j} = \left\{
\begin{array}{l}
1, \text{ if }   \mathbf{J}^*_{i,j}>u ;\\
0, \text{ otherwise}.
\end{array}
\right.
\end{equation}
A direct solution to align the latent causal structures is to restrict the discrepancy between the latent structures of source and target domains, following \cite{li2023transferable}. However, since these causal structures are represented using gradients with respect to latent variables, the direct alignment of the gradients can interfere with the model optimization, thereby increasing the difficulty of training. To overcome these issues, we find that it is sufficient to focus on reducing the differences in $\mathbf{J}^*$ between the source and target domains while ignoring the identical parts. This approach achieves causal structure alignment while minimizing the impact on the gradients. Based on this idea, we first obtain a masking matrix through an XOR operation. In this matrix, elements with a value of 0 represent identical structures between the source and target domains, while elements with a value of 1 represent differing structures. We then constrain only the differing parts, which are formalized as follows:
\begin{equation}
\small
\begin{split}
    \mathcal{M} &= (\mathbf{J}^S_d  > u)  \oplus  (\mathbf{J}^T_d  > u) \\
    L_{A} &= ||\mathbf{C}(\mathbf{J}_d^S \odot \mathcal{M})-\mathbf{J}_d^T\odot \mathcal{M}||_1,
\end{split}
\end{equation}
where $\oplus$ and $\odot$ denote the XOR and element-wise product operations, respectively, and $\mathbf{C}(\cdot)$ denotes the gradient-stoping operation \cite{li2023transferable}, which is used to enforce the target latent structures closer to the source latent structures.

By combining Equations (\ref{eq:elbop}) and (\ref{equ:sparsity}), we can achieve the total loss of our method, as follows:
\begin{equation}
    L_{total} = L_Y + \alpha L_R + \beta L_{KL} + \gamma L_S + \delta L_A, 
\label{eqn:overall_loss}
\end{equation}
where $\alpha, \beta, \gamma$, and $\delta$ are tunable hyper-parameters.

\section{EXPERIMENTS}

% \subsection{Simulation Experiment

% \subsection{Realworld Experimetns}
% \subsubsection{Domain Adaptative Time Series Forecasting}
\begin{table*}[t]
\centering
\caption{MAE and MSE of various methods on the PPG-DaLiA dataset, where R-COAT, iTrans, and TMixer are the abbreviation of RAINCOAT, iTransformer, and TimeMixer, respectively.}
\label{tab:ppg}
\setlength{\tabcolsep}{3mm}
\resizebox{\textwidth}{!}{%
\begin{tabular}{@{}c|c|ccccccccccc@{}}
\toprule
Metric                & Task                        & SASA   & GCA    & DAF    & CLUDA  & R-COAT & AdvSKM & iTrans & TMixer & TSLANet & SegRNN & Ours             \\ \midrule
\multirow{12}{*}{\rotatebox{90}{MSE}} & \multicolumn{1}{c|}{ $\text{C} \rightarrow \text{D}$ } & 0.7421 & 0.8260 & 0.7085 & 0.8869 & 0.9117   & 0.7795 & 0.6036       & 0.5901    & 0.6162  & 0.6333 & \textbf{0.5797} \\
                      & \multicolumn{1}{c|}{$\text{C} \rightarrow \text{S}$} & 0.6279 & 0.6898 & 0.5726 & 0.7851 & 0.8675   & 0.6998 & 0.3433       & 0.2962    & 0.3139  & 0.4259 & \textbf{0.2842} \\
                      & \multicolumn{1}{c|}{$\text{C} \rightarrow \text{W}$} & 0.8767 & 0.9076 & 0.8565 & 0.9709 & 0.9356   & 0.8846 & 0.8117       & 0.8162    & 0.8226  & 0.8452 & \textbf{0.7946} \\ \cmidrule(l){2-13} 
                      & \multicolumn{1}{c|}{D $\rightarrow$ C} & 0.9415 & 0.9858 & 0.9258 & 1.0562 & 1.0134   & 0.9539 & 0.8603       & 0.8476    & 0.8623  & 0.8599 & \textbf{0.8294} \\
                      & \multicolumn{1}{c|}{D $\rightarrow$ S} & 0.3643 & 0.4976 & 0.3751 & 0.5719 & 0.4645   & 0.4179 & 0.3121       & 0.3012    & 0.3128  & 0.3668 & \textbf{0.2697} \\
                      & \multicolumn{1}{c|}{D $\rightarrow$ W} & 0.8666 & 0.9197 & 0.8542 & 0.9844 & 0.9270   & 0.9055 & 0.8365       & 0.8211    & 0.8335  & 0.8366 & \textbf{0.7989} \\ \cmidrule(l){2-13} 
                      & \multicolumn{1}{c|}{S $\rightarrow$ C} & 0.9613 & 0.9568 & 0.9224 & 1.0834 & 1.1211   & 0.9459 & 0.8552       & 0.8360    & 0.8569  & 0.8571 & \textbf{0.8212} \\
                      & \multicolumn{1}{c|}{S $\rightarrow$ D} & 0.5708 & 0.6306 & 0.5756 & 0.8250 & 0.7340   & 0.6727 & 0.5855       & 0.5909    & 0.6178  & 0.5936 & \textbf{0.5358} \\
                      & \multicolumn{1}{c|}{S $\rightarrow$ W} & 0.8660 & 0.9002 & 0.8439 & 1.0098 & 1.0181   & 0.8852 & 0.8310       & 0.8128    & 0.8358  & 0.8283 & \textbf{0.7999} \\ \cmidrule(l){2-13} 
                      & \multicolumn{1}{c|}{W $\rightarrow$ C} & 0.9302 & 0.9786 & 0.8965 & 0.9321 & 0.9396   & 0.8992 & 0.8690       & 0.8340    & 0.8532  & 0.8768 & \textbf{0.8171} \\
                      & \multicolumn{1}{c|}{W $\rightarrow$ D} & 0.7634 & 0.7800 & 0.6983 & 0.8647 & 0.8124   & 0.7736 & 0.6010       & 0.5997    & 0.6287  & 0.6408 & \textbf{0.5577} \\
                      & \multicolumn{1}{c|}{W $\rightarrow$ S} & 0.7244 & 0.6561 & 0.5982 & 0.7484 & 0.8592   & 0.6459 & 0.3365       & 0.3000    & 0.3238  & 0.4535 & \textbf{0.2807} \\ \midrule
\multirow{12}{*}{\rotatebox{90}{MAE}} & \multicolumn{1}{c|}{C $\rightarrow$ D} & 0.5963 & 0.6565 & 0.5817 & 0.7061 & 0.7147   & 0.6371 & 0.5007       & 0.4836    & 0.5001  & 0.5306 & \textbf{0.4678} \\
                      & \multicolumn{1}{c|}{C $\rightarrow$ S} & 0.5139 & 0.5520 & 0.4817 & 0.6354 & 0.6982   & 0.5663 & 0.3300       & 0.2656    & 0.3044  & 0.3956 & \textbf{0.2373} \\
                      & \multicolumn{1}{c|}{C $\rightarrow$ W} & 0.6453 & 0.6750 & 0.6350 & 0.7214 & 0.6983   & 0.6658 & 0.6121       & 0.6138    & 0.6124  & 0.6354 & \textbf{0.5898} \\ \cmidrule(l){2-13} 
                      & \multicolumn{1}{c|}{D $\rightarrow$ C} & 0.6723 & 0.6927 & 0.6597 & 0.7578 & 0.7384   & 0.6820 & 0.6172       & 0.6128    & 0.6237  & 0.6334 & \textbf{0.5942} \\
                      & \multicolumn{1}{c|}{D $\rightarrow$ S} & 0.3620 & 0.4326 & 0.3695 & 0.5130 & 0.4442   & 0.4032 & 0.2782       & 0.2736    & 0.2784  & 0.3465 & \textbf{0.2264} \\
                      & \multicolumn{1}{c|}{D $\rightarrow$ W} & 0.6484 & 0.6733 & 0.6372 & 0.7295 & 0.6983   & 0.6780 & 0.6116       & 0.6066    & 0.6182  & 0.6214 & \textbf{0.5886} \\ \cmidrule(l){2-13} 
                      & \multicolumn{1}{c|}{S $\rightarrow$ C} & 0.6980 & 0.6787 & 0.6858 & 0.7816 & 0.7972   & 0.6918 & 0.6137       & 0.6060    & 0.6139  & 0.6279 & \textbf{0.5920} \\
                      & \multicolumn{1}{c|}{S $\rightarrow$ D} & 0.4814 & 0.5169 & 0.4861 & 0.6663 & 0.5941   & 0.5675 & 0.4750       & 0.4745    & 0.4894  & 0.4873 & \textbf{0.4381} \\
                      & \multicolumn{1}{c|}{S $\rightarrow$ W} & 0.6387 & 0.6583 & 0.6319 & 0.7500 & 0.7344   & 0.6760 & 0.6130       & 0.6051    & 0.6116  & 0.6094 & \textbf{0.5927} \\ \cmidrule(l){2-13} 
                      & \multicolumn{1}{c|}{W $\rightarrow$ C} & 0.6648 & 0.7078 & 0.6448 & 0.6921 & 0.6900   & 0.6675 & 0.6258       & 0.6134    & 0.6233  & 0.6546 & \textbf{0.5911} \\
                      & \multicolumn{1}{c|}{W $\rightarrow$ D} & 0.6060 & 0.6321 & 0.5673 & 0.6922 & 0.6490   & 0.6252 & 0.5032       & 0.4991    & 0.5078  & 0.5391 & \textbf{0.4595} \\
                      & \multicolumn{1}{c|}{W $\rightarrow$ S} & 0.5330 & 0.5557 & 0.4813 & 0.6283 & 0.6249   & 0.5328 & 0.3361       & 0.2858    & 0.3171  & 0.4241 & \textbf{0.2447} \\
\bottomrule
\end{tabular}%
}
\end{table*}

% , where R-COAT, iTRANS, and TMixer are the abbreviation of RAINCOAT, iTransformer, and TimeMixer, respectively

\begin{table*}[]
\caption{MAE and MSE for various methods on the Human Motion dataset.}
\label{tab:human}
\setlength{\tabcolsep}{3mm}
\resizebox{\textwidth}{!}{%
\begin{tabular}{@{}c|c|ccccccccccc@{}}
\toprule
Metric                & Task              & SASA   & GCA    & DAF    & CLUDA  & R-COAT & AdvSKM & iTrans & TMixer & TSLANet & SegRNN & Ours            \\ \midrule
\multirow{12}{*}{\rotatebox{90}{MSE}} & G $\rightarrow$  E & 0.1845 & 0.2701 & 0.2210 & 0.7697 & 0.4857 & 0.4330 & 0.1061 & 0.0611 & 0.0845  & 0.0680 & \textbf{0.0543} \\
                      & G $\rightarrow$ W & 0.1666 & 0.2499 & 0.2033 & 0.6710 & 0.4078 & 0.3856 & 0.1387 & 0.1363 & 0.1354  & 0.1378 & \textbf{0.1066} \\
                      & G $\rightarrow$ S & 0.1405 & 0.1645 & 0.1655 & 0.5964 & 0.3492 & 0.3562 & 0.0630 & 0.0593 & 0.0512  & 0.0599 & \textbf{0.0439} \\ \cmidrule{2-13} 
                      & E $\rightarrow$ G & 0.2220 & 0.2535 & 0.2460 & 0.7796 & 0.5953 & 0.5227 & 0.0806 & 0.0787 & 0.0930  & 0.0821 & \textbf{0.0588} \\
                      & E $\rightarrow$ W & 0.2250 & 0.2353 & 0.2264 & 0.6706 & 0.4090 & 0.4043 & 0.2412 & 0.1414 & 0.1748  & 0.1469 & \textbf{0.1199} \\
                      & E $\rightarrow$ S & 0.1173 & 0.1210 & 0.1247 & 0.5465 & 0.3794 & 0.3204 & 0.0568 & 0.0502 & 0.0456  & 0.0493 & \textbf{0.0399} \\ \cmidrule{2-13} 
                      & W $\rightarrow$ G & 0.2295 & 0.2719 & 0.2754 & 0.8068 & 0.6679 & 0.5823 & 0.0992 & 0.0775 & 0.0840  & 0.0858 & \textbf{0.0582} \\
                      & W $\rightarrow$ E & 0.2183 & 0.2616 & 0.2386 & 0.8086 & 0.6266 & 0.5130 & 0.1382 & 0.0741 & 0.0605  & 0.0965 & \textbf{0.0566} \\
                      & W $\rightarrow$ S & 0.1455 & 0.1506 & 0.1687 & 0.6043 & 0.3765 & 0.3481 & 0.0705 & 0.0525 & 0.0501  & 0.0712 & \textbf{0.0423} \\ \cmidrule{2-13} 
                      & S $\rightarrow$ G & 0.1637 & 0.1958 & 0.1790 & 0.7551 & 0.5090 & 0.3823 & 0.0909 & 0.0767 & 0.0728  & 0.0841 & \textbf{0.0637} \\
                      & S $\rightarrow$ E & 0.1555 & 0.1716 & 0.1505 & 0.7503 & 0.5252 & 0.3510 & 0.0866 & 0.0567 & 0.0677  & 0.0794 & \textbf{0.0564} \\
                      & S $\rightarrow$ W & 0.2012 & 0.2597 & 0.2524 & 0.6825 & 0.4572 & 0.4051 & 0.2182 & 0.1514 & 0.1632  & 0.1419 & \textbf{0.1274} \\ \midrule
\multirow{12}{*}{\rotatebox{90}{MAE}} & G $\rightarrow$ E & 0.2982 & 0.3700 & 0.3307 & 0.6734 & 0.5035 & 0.4933 & 0.2159 & 0.1274 & 0.1656  & 0.1469 & \textbf{0.1146} \\
                      & G $\rightarrow$ W & 0.2816 & 0.3479 & 0.3231 & 0.6339 & 0.4431 & 0.4618 & 0.2043 & 0.1910 & 0.2069  & 0.1958 & \textbf{0.1595} \\
                      & G $\rightarrow$ S & 0.2517 & 0.2751 & 0.2714 & 0.5885 & 0.3939 & 0.4386 & 0.1450 & 0.0975 & 0.1088  & 0.1278 & \textbf{0.0807} \\ \cmidrule{2-13} 
                      & E $\rightarrow$ G & 0.3200 & 0.3499 & 0.3502 & 0.6759 & 0.5531 & 0.5384 & 0.1628 & 0.1601 & 0.1890  & 0.1696 & \textbf{0.1310} \\
                      & E $\rightarrow$ W & 0.3204 & 0.3345 & 0.3301 & 0.6032 & 0.4569 & 0.4661 & 0.3211 & 0.1879 & 0.2171  & 0.2060 & \textbf{0.1658} \\
                      & E $\rightarrow$ S & 0.2383 & 0.2331 & 0.2474 & 0.5641 & 0.4390 & 0.4222 & 0.1280 & 0.0821 & 0.0898  & 0.0950 & \textbf{0.0785} \\ \cmidrule{2-13} 
                      & W $\rightarrow$ G & 0.3164 & 0.3466 & 0.3612 & 0.6755 & 0.5688 & 0.5534 & 0.2058 & 0.1688 & 0.1798  & 0.1674 & \textbf{0.1327} \\
                      & W $\rightarrow$ E & 0.3138 & 0.3566 & 0.3517 & 0.7005 & 0.5800 & 0.5419 & 0.2510 & 0.1606 & 0.1221  & 0.1827 & \textbf{0.1191} \\
                      & W $\rightarrow$ S & 0.2505 & 0.2556 & 0.2738 & 0.5988 & 0.4265 & 0.4373 & 0.1517 & 0.1034 & 0.0989  & 0.1534 & \textbf{0.0819} \\ \cmidrule{2-13} 
                      & S $\rightarrow$ G & 0.2783 & 0.3091 & 0.3004 & 0.6746 & 0.5065 & 0.4577 & 0.1917 & 0.1574 & 0.1521  & 0.1690 & \textbf{0.1369} \\
                      & S $\rightarrow$ E & 0.2799 & 0.2971 & 0.2784 & 0.6591 & 0.4972 & 0.4494 & 0.1812 & 0.1163 & 0.1319  & 0.1526 & \textbf{0.1152} \\
                      & S $\rightarrow$ W & 0.2946 & 0.3423 & 0.3422 & 0.6153 & 0.4572 & 0.4766 & 0.2833 & 0.1949 & 0.2007  & 0.1885 & \textbf{0.1689} \\ \bottomrule
\end{tabular}
}
\end{table*}

% , where R-COAT, iTRANS, and TMixer are the abbreviation of RAINCOAT, iTransformer, and TimeMixer, respectively.

\begin{table*}[]
\caption{MAE and MSE of various methods on the ETT dataset.}
\label{tab:ett}
\setlength{\tabcolsep}{3mm}
\resizebox{\textwidth}{!}{%
\begin{tabular}{@{}c|c|ccccccccccc@{}}
\toprule
Metric               & Task               & SASA   & GCA    & DAF    & CLUDA  & R-COAT & AdvSKM & iTrans & TMixer & TSLANet & SegRNN & Ours            \\ \midrule
\multirow{2}{*}{MSE} & 1 $\rightarrow$  2 & 0.2843 & 0.2820 & 0.1812 & 0.4097 & 0.3930 & 0.2839 & 0.1439 & 0.1306 & 0.1419  & 0.1775 & \textbf{0.1023} \\
                     & 2 $\rightarrow$ 1  & 0.8068 & 0.8421 & 0.6438 & 0.9711 & 0.8921 & 0.8352 & 0.5848 & 0.6620 & 0.6790  & 0.8740 & \textbf{0.4395} \\ \midrule
\multirow{2}{*}{MAE} & 1 $\rightarrow$  2 & 0.3977 & 0.4188 & 0.3248 & 0.5126 & 0.5083 & 0.4117 & 0.2764 & 0.2585 & 0.2751  & 0.3141 & \textbf{0.2195} \\
                     & 2 $\rightarrow$ 1  & 0.6686 & 0.6670 & 0.5925 & 0.7636 & 0.6796 & 0.6609 & 0.5493 & 0.5696 & 0.5693  & 0.6508 & \textbf{0.4482} \\ \bottomrule
\end{tabular}%

}
\end{table*}

\begin{table*}[]
\caption{MAE and MSE of various methods on the PEMS dataset.}
\label{tab:traffc}
\setlength{\tabcolsep}{3mm}
\resizebox{\textwidth}{!}{%
\begin{tabular}{@{}c|c|ccccccccccc@{}}
\toprule
Metric               & Task              & SASA   & GCA    & DAF    & CLUDA  & R-COAT & AdvSKM & iTrans & TMixer & TSLANet & SegRNN & Ours            \\ \midrule
\multirow{6}{*}{\rotatebox{90}{MSE}} & 1 $\rightarrow$ 2 & 0.3249 & 0.5068 & 0.4412 & 0.4223 & 0.4751 & 0.4176 & 0.4134 & 0.3698 & 0.3558  & 0.3312 & \textbf{0.2629} \\
                     & 1 $\rightarrow$ 3 & 0.3210 & 0.4788 & 0.3806 & 0.4055 & 0.4220 & 0.4045 & 0.3695 & 0.3342 & 0.3055  & 0.2769 & \textbf{0.2244} \\ \cmidrule{2-13} 
                     & 2 $\rightarrow$ 1 & 0.3035 & 0.4866 & 0.4356 & 0.4139 & 0.4024 & 0.3975 & 0.3499 & 0.3023 & 0.2886  & 0.2996 & \textbf{0.2393} \\
                     & 2 $\rightarrow$ 3 & 0.3100 & 0.4383 & 0.3694 & 0.3832 & 0.3858 & 0.3802 & 0.3682 & 0.2980 & 0.2871  & 0.2681 & \textbf{0.2221} \\ \cmidrule{2-13} 
                     & 3 $\rightarrow$ 1 & 0.3423 & 0.5112 & 0.4492 & 0.4627 & 0.4553 & 0.4155 & 0.4064 & 0.3377 & 0.3225  & 0.2952 & \textbf{0.2372} \\
                     & 3 $\rightarrow$ 2 & 0.2894 & 0.4427 & 0.3820 & 0.3620 & 0.4485 & 0.3572 & 0.4083 & 0.3729 & 0.3657  & 0.3515 & \textbf{0.2655} \\ \midrule
\multirow{6}{*}{\rotatebox{90}{MAE}} & 1 $\rightarrow$ 2 & 0.3237 & 0.3988 & 0.3906 & 0.3585 & 0.4041 & 0.3577 & 0.3415 & 0.3398 & 0.3162  & 0.2944 & \textbf{0.2392} \\
                     & 1 $\rightarrow$ 3 & 0.3463 & 0.4209 & 0.3780 & 0.3715 & 0.4044 & 0.3778 & 0.3533 & 0.3393 & 0.3061  & 0.2815 & \textbf{0.2327} \\ \cmidrule{2-13} 
                     & 2 $\rightarrow$ 1 & 0.3332 & 0.4105 & 0.4025 & 0.3727 & 0.3996 & 0.3630 & 0.3308 & 0.3200 & 0.2961  & 0.3058 & \textbf{0.2445} \\
                     & 2 $\rightarrow$ 3 & 0.3385 & 0.3872 & 0.3629 & 0.3532 & 0.3727 & 0.3565 & 0.3472 & 0.3139 & 0.3009  & 0.2848 & \textbf{0.2327} \\ \cmidrule{2-13} 
                     & 3 $\rightarrow$ 1 & 0.3534 & 0.4326 & 0.4063 & 0.3980 & 0.4238 & 0.3813 & 0.3855 & 0.3311 & 0.3310  & 0.2971 & \textbf{0.2425} \\
                     & 3 $\rightarrow$ 2 & 0.3116 & 0.3691 & 0.3496 & 0.3163 & 0.3928 & 0.3164 & 0.3503 & 0.3343 & 0.3245  & 0.3163 & \textbf{0.2433} \\ \bottomrule
\end{tabular}%
}
\end{table*}

\subsection{Domain Adaptation for Time Series Forecasting}

\subsubsection{Datasets}\quad In this section, we provide an overview of the five real-world datasets used to evaluate the \textbf{LCA} model. \textbf{PPG-DaLiA\footnote{\scriptsize 
https://archive.ics.uci.edu/ml/datasets/PPG-DaLiA}} is a publicly available multimodal dataset, used for PPG-based heart rate estimation. It includes physiological and motion data collected from 15 volunteers using wrist and chest-worn devices during various activities. We categorize the data into four domains based on activities: Cycling (C), Sitting (S), Working (W), and Driving (D). \textbf{Human Motion\footnote{\scriptsize 
 http://vision.imar.ro/human3.6m/description.php}} is a dataset for human motion prediction and cross-domain adaptation. We select four motion types as domains: Walking (W), Greeting (G), Eating (E), and Smoking (S). \textbf{Electricity Load Diagrams\footnote{\scriptsize https://archive.ics.uci.edu/dataset/321/electricityloaddiagrams20112014}} is a dataset containing electricity consumption data from 370 substations in Portugal, recorded from January 2011 to December 2014. We account for seasonal domain shifts by dividing the data into four domains based on the months: Domain 1 (January, February, March), Domain 2 (April, May, June), Domain 3 (July, August, September), and Domain 4 (October, November, December). \textbf{PEMS\footnote{\scriptsize 
 https://pems.dot.ca.gov/}} dataset comprises traffic speed data collected by the California Transportation Agencies over a 6-month period from January 1st, 2017 to May 31st, 2017. To account for seasonal domain shifts, we divide the data into three domains based on months: Domain 1 (January), Domain 2 (February), and Domain 3 (March). \textbf{ETT\footnote{\scriptsize 
 https://github.com/zhouhaoyi/ETDataset}} dataset describes the electric power deployment. We use the ETT-small subset, which includes data from two stations, treating each station as a separate domain.
\subsubsection{Baselines}  Here, we introduce the benchmarks for unsupervised domain adaptation in time series forecasting, including SASA\cite{Cai_Chen_Li_Chen_Zhang_Ye_Li_Yang_Zhang_2021}, AdvSKM\cite{liu2021adversarial}, DAF\cite{jin2022domain}, GCA\cite{li2023transferable}, CLUDA\cite{ozyurt2022contrastive}, and Raincoat\cite{he2023domain}. In addition, we include approaches that integrate the Gradient Reversal Layer (GRL) with state-of-the-art time series forecasting techniques such as SegRNN\cite{lin2023segrnn}, TSLANet\cite{tslanet}, TimeMixer\cite{wang2023timemixer}, and iTransformer\cite{liu2023itransformer}, which are based on RNN, CNN, MLP, and Transformer architectures, respectively. For all the above methods, we ensure consistency by employing the same experimental settings among them.

\noindent\subsubsection{Experimental Settings}\quad We performed multivariate time series forecasting across all datasets, using an input length of 30 and an output length of 10. Each dataset is partitioned into train, validation, and test sets. To ensure robustness, we run each method three times with different random seeds and report the average performance. The model yielding the best validation results is selected, and its performance is subsequently assessed on the test set. In all experiments, we employed the ADAM optimizer \cite{kingma2014adam} and used Mean Squared Error (MSE) as the loss function for prediction. We report the Mean Squared Error (MSE) and Mean Absolute Error (MAE) as evaluation metrics. 

\noindent\subsubsection{Quantization Result}\quad We conducted comparative experiments between our method and baseline models across five datasets, and the results are presented in Tables~\ref{tab:ppg},~\ref{tab:human},~\ref{tab:ett},~\ref{tab:traffc},~and~\ref{tab:eld}, respectively. The results show that our method significantly outperforms the baseline models across all metrics, particularly on the Human Motion Dataset, where causal mechanisms are more pronounced (since the relationships between human joints represent a natural causal structure). This further highlights the superiority of our approach.

Additionally, we find that combining state-of-the-art time series forecasting methods with a gradient reversal layer to address UDA in time series forecasting also yielded excellent results. This is clear in the results of iTransformer, TimeMixer, TSLAnet, and SegRNN specifically, where these methods leverage the most advanced models in their model architectures (iTransformer for Transformers, TimeMixer for MLPs, TSLAnet for CNNs, and SegRNN for RNNs). When combined with a gradient reversal layer, these methods effectively capture domain-shared features, enhancing prediction performance. Despite their strong backbones, our approach still outperforms these baselines. Unlike TimeMixer, which captures complex macro and micro temporal information by decomposing trends and seasonal information using a carefully designed MLP, our method uses a simpler MLP learning network, further demonstrating our superiority.

Furthermore, existing UDA methods in the time series domain, such as DAF, CLUDA, Raincoat, and AdvSKM, exhibited 
relatively weaker performance, possibly due to their less optimal backbone models.  However, we notice that SASA and GCA perform better than other time series UDA methods, such as DAF, CLUDA, Raincoat, and AdvSKM. This can be attributed to their consideration of sparse correlation structures among observed variables and the consistency of Granger causality structures, despite having relatively simple backbones. 
% The relatively weaker performance of GCA compared to SASA can be attributed to GCA's use of recursive forecasting, where errors tend to accumulate over multiple time steps, ultimately affecting prediction accuracy.

% \begin{figure*}[h]
%     \centering
%     \includegraphics[width=1.01\textwidth, trim=0cm  12cm 0cm 0cm, clip]{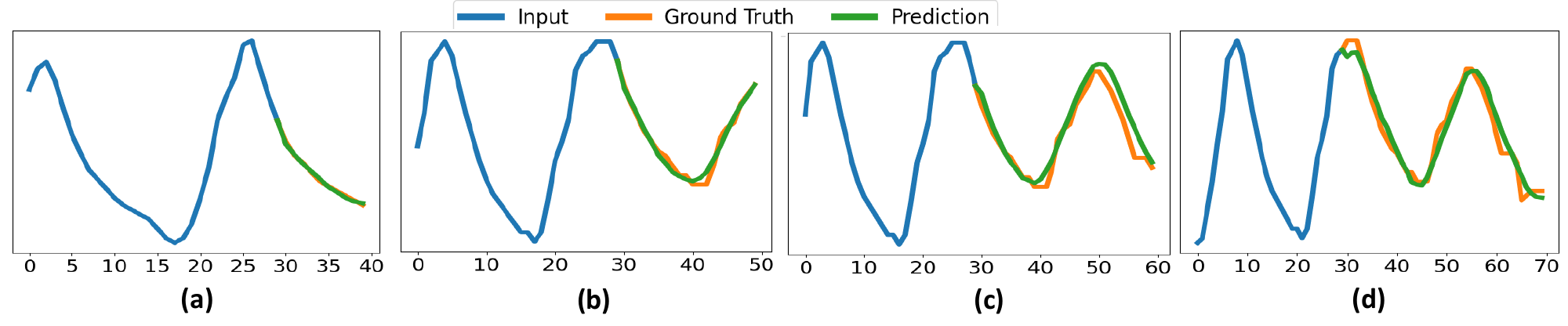}
%     \caption{Visualization of prediction results across varying forecast lengths for the transition from domain 1 to domain 2 in the ETT dataset. Subfigures (a), (b), (c), and (d) represent forecast lengths of 10, 20, 30, and 40, respectively.}
%     \label{fig:diff_len_mase}
% \end{figure*}

\begin{figure}[h]
    \centering
    \includegraphics[width=0.5\textwidth, trim=0.7cm  0cm 0.5cm 0.7cm, clip]{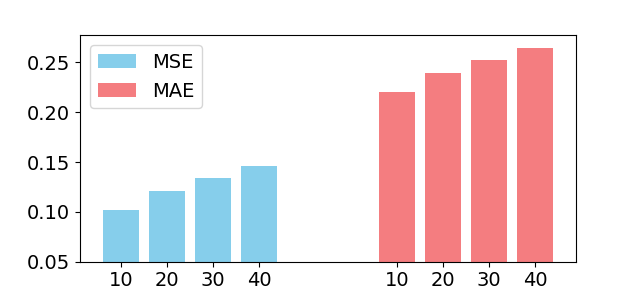}
    \caption{The MSE and MAE values after predicting different lengths in the transition from domain 1 to domain 2 in the ETT dataset. Subfigures (a), (b), (c), and (d) show the forecasting results on lengths of 10, 20, 30, and 40 time steps, respectively.}
    \label{fig:diff_len_v}
\end{figure}

\noindent\textbf{Visualization Result:}\quad To demonstrate the robustness of our method, we performed predictions of varying lengths on the ETT dataset, specifically from domain 1 to domain 2. For each input of the 30 time steps, we predicted future values over 10, 20, 30, and 40 steps. The resulting MSE and MAE values for these different forecast lengths are illustrated in Figure \ref{fig:diff_len_mase}. Notably, our method consistently outperforms most baseline methods that only predict 10 steps, particularly for longer forecast horizons (20, 30, and 40 steps), highlighting the superior performance of our approach. Additionally, we visualize the predictions for the last dimension of the ETT dataset, as shown in Figure \ref{fig:diff_len_v}. The visualization confirms that our method effectively captures the temporal variations in the data.

\subsection{Domain Adaptation for Time Series Classification}
In classification tasks, we experimented on the UCIHAR and HHAR datasets, following AdaTime \cite{ragab2023adatime} framework, which is a benchmarking suite for domain adaptation on time series data. Furthermore, we validate the performance of our method on the high-dimensional video classification datasets.

\begin{table*}[]
\small
\caption{F1-scores of various methods on the UCIHAR time series dataset.}
\resizebox{\textwidth}{!}{%
% Please add the following required packages to your document preamble:
% \usepackage{multirow}
\begin{tabular}{ccccccccccc}
\toprule
\multicolumn{1}{c|}{Task}       & AdvSKM & CoDATS & CoTimix & DANN   & DDC    & DeepCoral & DIRT   & MMDA   & SASA   & Ours            \\ \midrule
\multicolumn{1}{c|}{18 $\rightarrow$ 14} & 0.9773 & 0.8988 & 0.9232  & 0.7729 & 0.9804 & 0.9902      & 1.0000 & 0.9869 & 0.9869 & \textbf{1.0000} \\ \midrule
\multicolumn{1}{c|}{6 $\rightarrow$ 13}  & 0.9864 & 0.9932 & 0.9386  & 0.7064 & 0.9931 & 1.0000      & 1.0000 & 0.9778 & 0.9935 & \textbf{1.0000} \\ \midrule
\multicolumn{1}{c|}{20 $\rightarrow$ 9}  & 0.3835 & 0.5649 & 0.5527  & 0.3885 & 0.3730 & 0.5871      & 0.5458 & 0.5356 & 0.4860 & \textbf{0.6946} \\ \midrule
\multicolumn{1}{c|}{7 $\rightarrow$ 18}  & 0.7826 & 0.7813 & 0.8429  & 0.4479 & 0.7604 & 0.8246      & 0.8501 & 0.7723 & 0.7414 & \textbf{0.9108} \\ \midrule
\multicolumn{1}{c|}{19 $\rightarrow$ 11} & 0.6765 & 0.9890 & 0.9388  & 0.5348 & 0.6999 & 0.9823      & 0.8651 & 0.9386 & 0.8341 & \textbf{0.9963} \\ \midrule
\multicolumn{1}{c|}{17 $\rightarrow$ 18} & 0.8742 & 0.8730 & 0.9011  & 0.4884 & 0.8767 & 0.9507      & 0.8820 & 0.9191 & 0.7520 & \textbf{0.9601} \\ \midrule
\multicolumn{1}{c|}{9 $\rightarrow$ 19}  & 0.4269 & 0.6956 & 0.8939  & 0.6113 & 0.3869 & 0.7467      & 0.8290 & 0.6182 & 0.6000 & \textbf{0.9692} \\ \midrule
\multicolumn{1}{c|}{2 $\rightarrow$ 12}  & 0.9633 & 0.8191 & 0.8775  & 0.6701 & 0.9857 & 0.9886      & 0.9859 & 0.9963 & 1.0000 & \textbf{1.0000} \\ \midrule
\multicolumn{1}{c|}{12 $\rightarrow$ 3}  & 0.9609 & 0.9744 & 0.9904  & 0.5566 & 0.9678 & 0.9968      & 0.9266 & 0.9936 & 0.9936 & \textbf{1.0000} \\ \midrule
\multicolumn{1}{c|}{17 $\rightarrow$ 14}                     & 0.8804 & 0.7389 & 0.8423  & 0.4021 & 0.8435 & 0.8572      & 0.7738 & 0.9062 & 0.8694 & \textbf{0.9673} \\ \bottomrule
\end{tabular}
}
\label{tab:har}
\end{table*}

\noindent\subsubsection{Datasets}
\noindent\textbf{Time Series Datasets:} \quad We consider the UCIHAR\cite{anguita2013public} dataset, which comprises sensor data from accelerometers, gyroscopes, and body sensors collected from 30 subjects performing six activities: walking, walking upstairs, walking downstairs, standing, sitting, and lying down. Due to the variability between individuals, each subject is treated as a separate domain. We also consider the HHAR\cite{stisen2015smart} dataset, which contains sensor readings from smartphones and smartwatches, collected from 9 subjects, with each subject similarly treated as an individual domain.

\noindent\textbf{Video Datasets:} We adopted two widely recognized and frequently used datasets, i.e., UCF101 and HMDB51. The UCF101 dataset, curated by the University of Central Florida, consists of videos primarily sourced from YouTube, capturing a wide variety of daily activities and sports actions. In contrast, the HMDB51 dataset, collected by the University of Massachusetts Amherst, includes videos from movies, public databases, and YouTube, offering a more diverse range of real-world action scenes. Both datasets serve as key benchmarks for evaluating video classification algorithms. For data processing, we followed the approach used in TranSVAE, extracting relevant and overlapping action categories from both UCF101 and HMDB51. This resulted in a combined dataset of 3,209 videos, where UCF101 provided 1,438 training videos and 571 validation videos, and HMDB51 contributed 840 training videos and 360 validation videos. Based on these datasets, we defined two video-based unsupervised domain adaptation tasks: U → H and H → U.

% 对于，由于标签是离散的，不少方法中的约束只能施加在离散标签中，因此对于分类的domain adaptation，我们考虑经典
\noindent\subsubsection{Baselines} \textcolor{black}{Since the time series classification UDA methods use constraints that can be only applied to discrete labels}, we selected several state-of-the-art methods implemented in the AdaTime framework as baselines for comparison. These methods include AdvSKM\cite{liu2021adversarial}, CODATS\cite{wilson2020multi}, CoTMix\cite{eldele2022cotmix}, DANN\cite{ganin2016domain}, DDC\cite{tzeng2014deep}, DeepCoral\cite{sun2017correlation}, DIRT\cite{shu2018dirt}, MMDA\cite{rahman2020minimum}, and SASA\cite{cai2021time}.

For the video classification task, we conducted a comprehensive comparison based on the experimental settings of the latest TranSVAE approach~\cite{wei2023unsupervised}. Specifically, we included five widely-used image-based UDA methods, which perform domain adaptation by disregarding temporal information: DANN~\cite{ganin2016domain}, JAN~\cite{long2017deep}, ADDA~\cite{tzeng2017adversarial}, AdaBN~\cite{li2018adaptive}, and MCD~\cite{sahoo2021contrast}. Additionally, we benchmarked our approach against the latest state-of-the-art video-based UDA methods, including TA3N~\cite{chen2019temporal}, SAVA~\cite{choi2020shuffle}, TCoN~\cite{pan2020adversarial}, ABG~\cite{luo2020adversarial}, CoMix~\cite{sahoo2021contrast}, CO2A~\cite{da2022dual}, and MA2L-TD~\cite{chen2022multi}.

\noindent\subsubsection{Experimental Settings} For each of the time series datasets, we randomly generated 10 transfer directions. Each experiment was run three times and the average F1-score is reported. We generally followed the validation methods provided by AdaTime benchmark.

For video classification, we strictly followed the data processing procedure used in TranSVAE. TranSVAE employs a pretrained I3D model on the Kinetics dataset as the backbone to extract features from each video frame, which are stored and subsequently used as input to the network. To maintain consistency, we directly downloaded these precomputed features from TranSVAE and used them as input for our method. For the comparison methods, we referred to the experimental data from the TranSVAE network, where classification accuracy was recorded. Consistent with the experimental settings for prediction tasks, we employed the Adam optimizer. All experiments were implemented in PyTorch and conducted on a single NVIDIA GTX 3090 GPU with 24GB of memory.
% \vspace{0.5em}\\

\noindent\subsubsection{Quantization and Visual Results}\quad 
Here, we provide a detailed analysis of our method’s performance on classification tasks.
Tables~\ref{tab:har} and \ref{tab:hhar} present the performance of our proposed method alongside several baseline methods on the UCIHAR and HHAR time series classification datasets. It is worth noting that AdvSKM, CoDATS, DANN, and DIRT are adversarial-based methods, while DDC, Deep-CORAL, and MMDA methods align domains by minimizing a distance metric. On the other hand, CoTMix utilizes contrastive learning, while SASA enforces the consistency of sparse correlation structures to align domains. 

\textcolor{black}{The experimental results on the UCIHAR dataset show that our approach achieves the highest F1-scores across multiple cross-domain scenarios (e.g., 18 → 14, 6 → 13, 20 → 9), reaching a perfect score of 1 in some of them, significantly outperforming other comparative models such as AdvSKM, CoDATS, CoTMix, and DANN. Moreover, our model demonstrates exceptional stability across all tasks, further validating its robustness in temporal classification tasks. On the other temporal classification dataset, HHAR, our method remains the best-performing approach. For the cross-domain task 0 → 5, our method achieved an F1-score of 0.7899, significantly outperforming other comparative models, such as CoDATS (0.5714), MMDA (0.5444), and CoTMix (0.5064). Additionally, our method consistently achieves the best results across other tasks, including 3 → 1, 7 → 4, and 5 → 8.}

\begin{table*}[]
\small
\caption{F1-scores of various methods on the HHAR time series dataset.}
\resizebox{\textwidth}{!}{%
% Please add the following required packages to your document preamble:
% \usepackage{multirow}

\begin{tabular}{c|cccccccccc}
\toprule
Task   & AdvSKM & CoDATS & CoTMix & DANN   & DDC    & DeepCoral & DIRT   & MMDA   & SASA   & Ours            \\ \midrule
3 $\rightarrow$ 1 & 0.9077 & 0.9582 & 0.9601 & 0.8925 & 0.9120 & 0.9731      & 0.9727 & 0.9472 & 0.9747 & \textbf{0.9802} \\  \midrule
7 $\rightarrow$ 4 & 0.8477 & 0.9207 & 0.9391 & 0.9587 & 0.8245 & 0.8990      & 0.8932 & 0.9274 & 0.9216 & \textbf{0.9723} \\  \midrule
2 $\rightarrow$ 0 & 0.7286 & 0.6855 & 0.6447 & 0.7594 & 0.7390 & 0.7426      & 0.7474 & 0.7835 & 0.7398 & \textbf{0.8058} \\  \midrule
5 $\rightarrow$ 7 & 0.5064 & 0.8987 & 0.9218 & 0.9186 & 0.5028 & 0.8234      & 0.8540 & 0.8200 & 0.8200 & \textbf{0.9378} \\  \midrule
0 $\rightarrow$ 5 & 0.4140 & 0.5714 & 0.5064 & 0.4410 & 0.4138 & 0.4365      & 0.4794 & 0.5444 & 0.5055 & \textbf{0.7899} \\  \midrule
3 $\rightarrow$ 5 & 0.8653 & 0.9701 & 0.9704 & 0.9799 & 0.8828 & 0.9317      & 0.9595 & 0.9750 & 0.9696 & \textbf{0.9824} \\  \midrule
0 $\rightarrow$ 2 & 0.5332 & 0.6743 & 0.7588 & 0.7083 & 0.5723 & 0.6217      & 0.6673 & 0.7239 & 0.6980 & \textbf{0.8028} \\  \midrule
5 $\rightarrow$ 8 & 0.9279 & 0.9796 & 0.9739 & 0.9915 & 0.8242 & 0.9818      & 0.9776 & 0.9874 & 0.9868 & \textbf{0.9927} \\  \midrule
1 $\rightarrow$ 6 & 0.6978 & 0.9047 & 0.9313 & 0.9353 & 0.6160 & 0.8751      & 0.9089 & 0.9144 & 0.8890 & \textbf{0.9415} \\  \midrule
7 $\rightarrow$ 3 & 0.9155 & 0.9338 & 0.9637 & 0.9547 & 0.8912 & 0.9314      & 0.9379 & 0.9575 & 0.9397 & \textbf{0.9688} \\  \bottomrule
\end{tabular}
}
\label{tab:hhar}
\end{table*}

\begin{table}[]
\small
\renewcommand{\arraystretch}{1.3}
\caption{Classification accuracy on the HMDB-UCF video dataset.}
\begin{tabular}{c|c|c|c|c}
\toprule
Method   & Backbone   & U $\rightarrow$ H & H $\rightarrow$ U & Average \\ \midrule
DANN     & ResNet-101 & 75.28  & 76.36  & 75.82   \\
JAN      & ResNet-102 & 74.72  & 76.69  & 75.71   \\
AdaBN    & ResNet-103 & 72.22  & 77.41  & 74.82   \\
MCD      & ResNet-104 & 73.89  & 79.34  & 76.62   \\
TA3N     & ResNet-105 & 78.33  & 81.79  & 80.06   \\
ABG      & ResNet-106 & 79.17  & 85.11  & 82.14   \\
TCoN     & ResNet-107 & 87.22  & 89.14  & 88.18   \\
MA2L-TD  & ResNet-108 & 85     & 86.59  & 85.8    \\ \midrule
DANN     & I3D        & 80.83  & 88.09  & 84.46   \\
ADDA     & I3D        & 79.17  & 88.44  & 83.81   \\
TA3N     & I3D        & 81.38  & 90.54  & 85.96   \\
SAVA     & I3D        & 82.22  & 91.24  & 86.73   \\
CoMix    & I3D        & 86.66  & 93.87  & 90.22   \\
CO2A     & I3D        & 87.78  & 95.79  & 91.79   \\
TranSVAE & I3D        & 87.78  & 98.95  & 93.37   \\ \midrule
Ours      & I3D        & \textbf{89.44}  & \textbf{99.12}  & \textbf{94.28}   \\ \bottomrule
\end{tabular}
\label{classification result}
\end{table}

\begin{figure*}[h]
    \centering
    \label{fig:scatter}
    \includegraphics[width=1.01\textwidth, trim=0cm  11.5cm 0cm 0cm, clip]{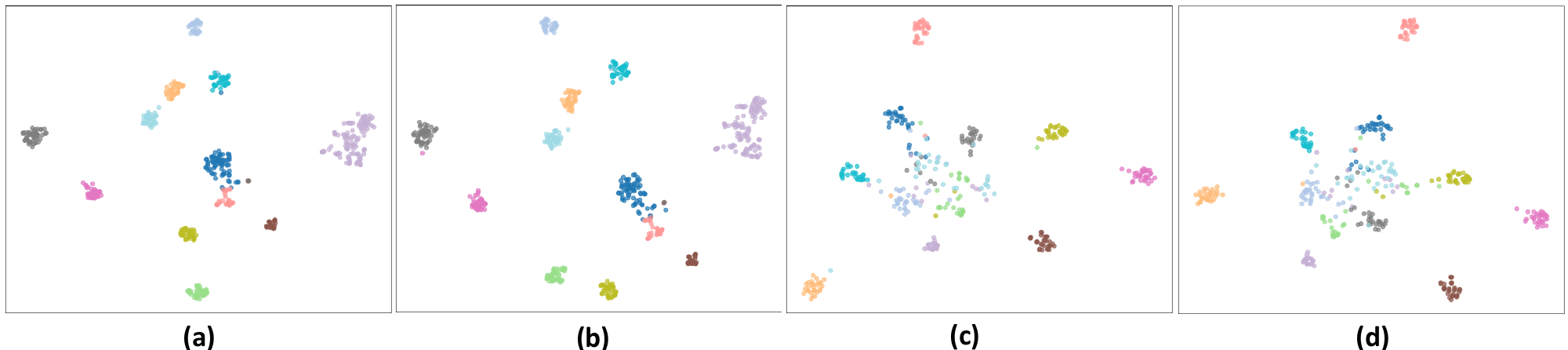}
    \caption{Classification Visualization: (a) and (b) show scatter plots of the latent variables and probability vectors, respectively, after dimensionality reduction using t-SNE for the H$\rightarrow$U scenario. (c) and (d) present the corresponding scatter plots for the U$\rightarrow$H scenario. These visualizations illustrate how the model’s latent representations and probability vectors effectively capture and distinguish between categories.}
\end{figure*}

% As shown in the tables, our method consistently outperforms all comparative methods across all transfer directions, demonstrating the effectiveness of our approach. 

Table~\ref{classification result} presents a comparative analysis of our method against the baseline on the UCF-HMDB video classification dataset. We notice that methods utilizing the I3D backbone\cite{carreira2017quo} generally outperform those using ResNet-101. Our method consistently exceeds the performance of all prior approaches, achieving an average accuracy of 94.28\%. These results highlight the exceptional performance of our approach even on high-dimensional data.

TranSVAE represents the previous state-of-the-art method, modeling videos as being generated by both dynamic and static latent variables, and has demonstrated remarkable performance in video unsupervised domain adaptation (UDA). Similarly, our approach models the data generation process for time series data—in which videos are a specific instance—from a causal perspective. It assumes that the underlying data generation mechanisms remain consistent across different domains. This causal modeling approach, combined with advanced deep learning architectures, has also delivered impressive results, reinforcing the potential of modeling problems from a causal perspective and leveraging deep learning networks to fit the data generation process.

Additionally, we visualize the classification results for both H$\rightarrow$U and U$\rightarrow$H scenarios. Specifically, we reduced the dimensionality of the model’s output probability vectors or latent variable representations using t-SNE and depicted the results in scatter plots shown in Figure~\ref{fig:scatter}. The visualizations reveal that both the latent variables and probability vectors effectively distinguish between categories.

% \begin{figure*}[h]
%     \centering
%     \label{fig:scatter}
%     \includegraphics[width=1.01\textwidth, trim=0cm  11.5cm 0cm 0cm, clip]{fig/scatter_pic.pdf}
%     \caption{Classification Visualization: (a) and (b) show scatter plots of latent variables and probability vectors, respectively, after dimensionality reduction using t-SNE for the H$\rightarrow$U scenario. (c) and (d) present the corresponding scatter plots for the U$\rightarrow$H scenario. These visualizations illustrate how the model’s latent representations and probability vectors effectively capture and distinguish between categories.}
% \end{figure*}

% \begin{figure*}[h]
%     \centering
%     \label{fig:sensetivty}
%     \includegraphics[width=1.01\textwidth, trim=0.1cm  11.5cm 0.5cm 0cm, clip]{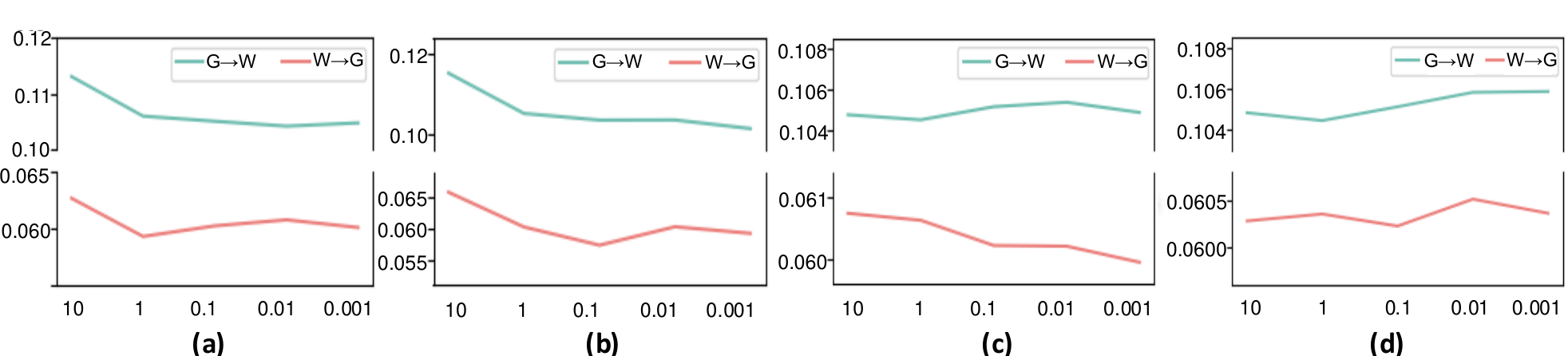}
%     \caption{Sensitivity analysis of parameters \(\alpha\), \(\beta\), \(\gamma\), and \(\delta\) is shown in (a), (b), (c), and (d). The x-axis represents parameter values, and the y-axis shows mse.}
% \end{figure*}

\begin{figure*}[h]
    \centering
    \includegraphics[width=1.01\textwidth, trim=0.1cm  11.5cm 0.5cm 0cm, clip]{fig/}
    \caption{Sensitivity analysis of the parameters \(\alpha\), \(\beta\), \(\gamma\), and \(\delta\) (from Equation~\ref{eqn:overall_loss}) is shown in (a), (b), (c), and (d), respectively. The x-axis represents parameter values, and the y-axis shows the MSE value.}
    \label{fig:sensetivty}
\end{figure*}

\subsection{Ablation Study}
\begin{figure}[h]
    \centering
    \label{fig:ablation}
    \includegraphics[width=1.01\textwidth, trim=0cm  4.2cm 3.2cm 0cm, clip]{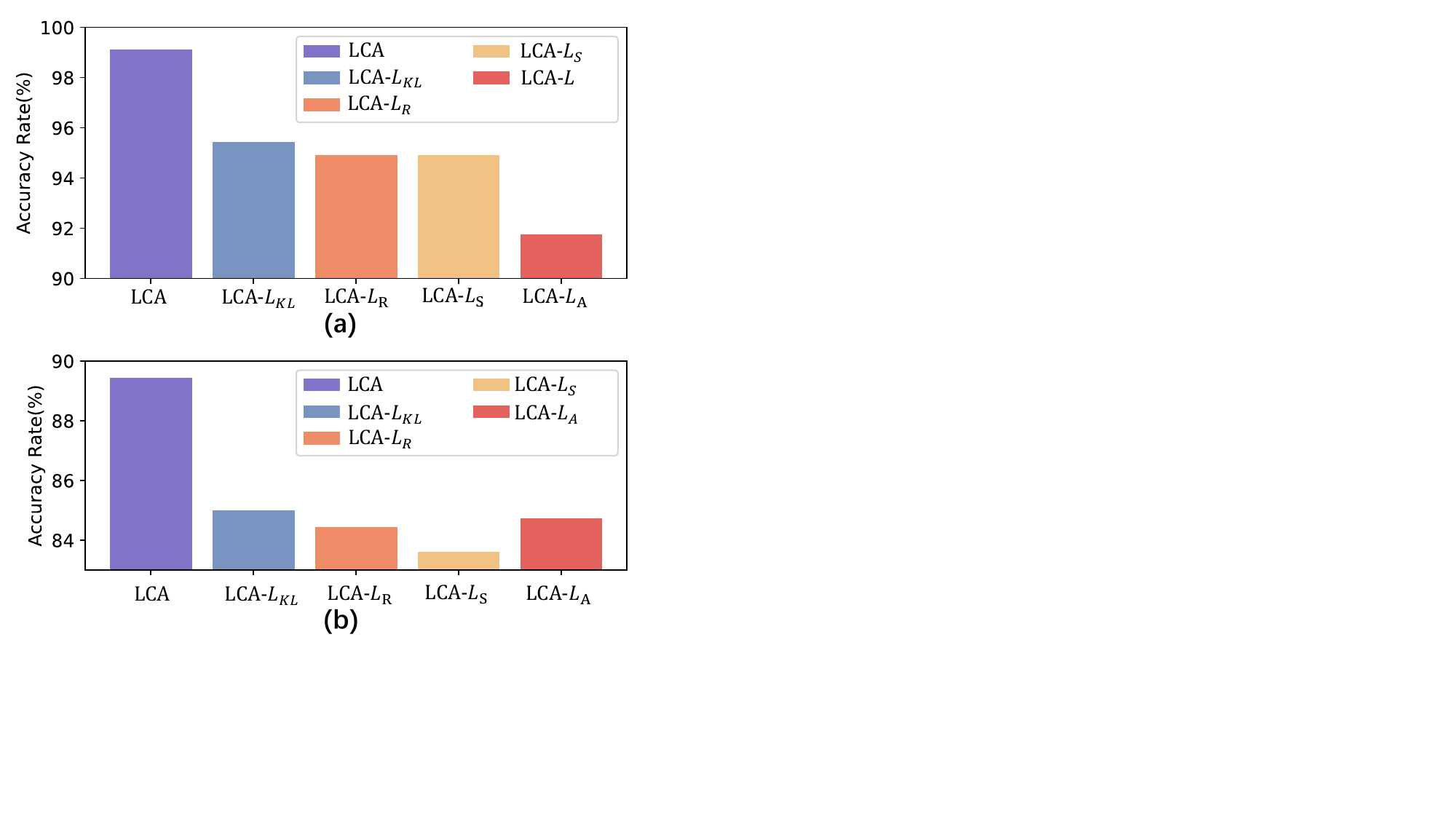}
    \caption{Ablation Results Visualization: (a) presents the results for the H$\rightarrow$U scenario, while (b) shows the results for the U$\rightarrow$H scenario. These visualizations illustrate the impact of different components on the performance of our model across the two scenarios.}
\end{figure}
To rigorously evaluate the contribution of each component in the proposed loss function, we devised four model variants, described as follows.

\begin{itemize}[leftmargin=*]
\setlength{\itemsep}{0pt}
    \item \textbf{LCA-$L_{KL}$}: We remove $L_{KL}$, a critical term in the ELBO.
    \item \textbf{LCA-$L_{R}$}: We remove $L_{R}$, another essential term in the ELBO.
    \item \textbf{LCA-$L_{S}$}: We remove $L_{S}$, which enforces sparsity in the causal structure.
    \item \textbf{LCA-$L_{A}$}: We remove $L_{A}$, which ensures consistency of causal structures across different domains.
\end{itemize}
We conducted the ablation experiments on the high-dimensional video datasets, HMDB, and UCF. In these experiments, we systematically set the weight of each corresponding loss term to zero to observe the performance of the resulting model variants. The results are presented in Figure~\ref{fig:ablation}.
Our findings demonstrate that each loss component plays a crucial role in enhancing the model's performance. Specifically, $\mathcal{L}_{KL}$ and $\mathcal{L}_{R}$ are vital elements of the ELBO, and maximizing them enables the model to accurately learn the joint distribution of the data. From a theoretical standpoint, $\mathcal{L}_{KL}$ ensures that the learned distribution of the latent variables closely approximates the true distribution, while $\mathcal{L}_{R}$ ensures that the latent variables capture as much information from the observed variables as possible.
The term $\mathcal{L}_{S}$ enforces the sparsity of relationships between adjacent latent variables, a requirement rooted in identifiability theory. This constraint aligns with real-world scenarios, where causal relationships are typically sparse, and it helps prevent the model from learning redundant associations. Lastly, $\mathcal{L}_{A}$ aligns the causal mechanisms in the target domain with those in the source domain. The substantial improvement in performance observed in the experiments validates the importance of enforcing consistency in causal mechanisms across domains, thereby further affirming the soundness of our approach.

\subsection{Sensitivity Analysis}
We performed a sensitivity analysis on the parameters \(\alpha\), \(\beta\), \(\gamma\), and \(\delta\) in the loss function (Equation~\ref{eqn:overall_loss}), focusing on the transfer directions from G to W and W to G in the Human Motion dataset. We set several values for each parameter, ranging from 10 to 0.001 and decreasing by a factor of 10 each time. The experimental results are shown in Figure \ref{fig:sensetivty}. 
% The results indicate that our method consistently outperforms the baseline for different parameter selections, demonstrating its robustness. 
Notably, as the parameters decrease, the MSE exhibits a trend of initially decreasing and then increasing, which matches our expectations: when the parameters are large, the constraints on the corresponding modules are strongest, causing the model to over-focus on certain areas and leading to a decline in performance. Conversely, when these parameters are small, the corresponding modules receive insufficient attention, resulting in suboptimal performance.

\section{Conclusion}\label{conclusion}
In this paper, we propose a Latent Causality Alignment (LCA) model for time series domain adaptation. By identifying low-dimensional latent variables and reconstructing their causal structures with sparsity constraints, we effectively transfer domain knowledge through latent causal mechanism alignment. The proposed method not only addresses the challenges of modeling latent causal mechanisms in high-dimensional time series data but also guarantees the uniqueness of latent causal structures, providing improved performance on domain-adaptive time series classification and forecasting tasks. However, our method assumes sufficient changes in historical information for latent variable identification. Exploring how to relax this assumption and extend our framework to more complex scenarios would be a promising direction for future work.

% use section* for acknowledgment
% \ifCLASSOPTIONcompsoc
%   % The Computer Society usually uses the plural form
%   \section*{Acknowledgments}
% \else
%   % regular IEEE prefers the singular form
%   \section*{Acknowledgment}
% \fi

\bibliographystyle{IEEEtran}
\bibliography{ref}

\onecolumn 

\appendix
\begin{lemma} \label{lemma: 1}
Suppose there exists invertible function $\hat{\mathbf{g}}$ that maps $\mathbf{x}_t$ to $\hat{\mathbf{z}}_t$, i.e.
\begin{equation}
\label{eq: hatz_xt_invertible_function}
    \hat{\mathbf{z}}_t=\hat{\mathbf{g}}_(\mathbf{x}_t)
\end{equation}
such that the components of $\hat{\mathbf{z}}_t$ are mutually independent conditional on $\hat{\mathbf{z}}_{t-1}$ .Let
\begin{equation}
\label{eq: second_third_derivatives}
 % \small
 \begin{split}
    \mathbf{v}_{t,k} =
    \Big(\frac{\partial^{2}\log p(z_{t,k}|\mathbf{z}_{t-1}) }{\partial z_{t,k}\partial z_{t-1,1}},\frac{\partial^{2}\log p(z_{t,k}|\mathbf{z}_{t-1})}{\partial z_{t,k}\partial z_{t-1,2}},...,
        \frac{\partial^{2}\log p(z_{t,k}|\mathbf{z}_{t-1})}{\partial z_{t,k}\partial z_{t-1,n}}\Big)^{\mathsf{T}},\\
    \mathring{\mathbf{v}}_{t,k}=
\Big(\frac{\partial^{3}\log p(z_{t,k}|\mathbf{z}_{t-1})}{\partial z_{t,k}^{2}\partial z_{t-1,1}},\frac{\partial^{3}\log p(z_{t,k}|\mathbf{z}_{t-1})}{\partial z_{t,k}^{2}\partial z_{t-1,2}},...,
    \frac{\partial^{3}\log p(z_{t,k}|\mathbf{z}_{t-1})}{\partial z_{t,k}^{2}\partial z_{t-1,n}}\Big)^{\mathsf{T}}.
    \end{split}
\end{equation}
If for each value of $\mathbf{z}_t,\mathbf{v}_{t,1},\mathring{\mathbf{v}}_{t,1},\mathbf{v}_{t,2},\mathring{\mathbf{v}}_{t,2},...,,\mathbf{v}_{t,n},\mathring{\mathbf{v}}_{t,n}$, as 2n vector function in $z_{t-1,1},z_{t-1,2},...,z_{t-1,n}$, are linearly independent, then $\mathbf{z}_t$ must be an invertible, component-wise transformation of a permuted version of $\hat{\mathbf{z}}_t$.
\end{lemma}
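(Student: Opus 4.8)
The plan is to turn the statement into a rigidity property of the Jacobian of the change-of-variables map between the true and estimated latent coordinates, exploiting the fact that conditional independence of the components forces the cross second derivatives of the log-conditional density to vanish in \emph{both} coordinate systems. First I would use the invertibility of $\mathbf{g}$ and $\hat{\mathbf{g}}$ to form the composite diffeomorphism $\mathbf{h} = \hat{\mathbf{g}} \circ \mathbf{g}^{-1}$, so that $\hat{\mathbf{z}}_t = \mathbf{h}(\mathbf{z}_t)$ acts within a single time slice, and likewise relates $\hat{\mathbf{z}}_{t-1}$ to $\mathbf{z}_{t-1}$. Writing $a_{ij} = \frac{\partial z_{t,i}}{\partial \hat{z}_{t,j}}$ for the entries of the associated Jacobian, the target reduces to showing that each row of $(a_{ij})$ has exactly one nonzero entry.

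Next I would write the change-of-variables identity $\log p(\hat{\mathbf{z}}_t \mid \hat{\mathbf{z}}_{t-1}) = \log p(\mathbf{z}_t \mid \mathbf{z}_{t-1}) + \log|\det J_{\mathbf{h}}|$ and substitute the two factorizations produced by conditional independence, namely $\log p(\mathbf{z}_t \mid \mathbf{z}_{t-1}) = \sum_i \log p(z_{t,i} \mid \mathbf{z}_{t-1})$ for the true process and the analogous sum for the estimated process. Differentiating the estimated side with respect to $\hat{z}_{t,j}$ and then $\hat{z}_{t,k}$ for $j \neq k$, the cross second derivative vanishes by the assumed mutual independence of the components of $\hat{\mathbf{z}}_t$ given $\hat{\mathbf{z}}_{t-1}$. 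On the true side the same pair of derivatives yields terms of two kinds: those proportional to $a_{ij}a_{ik}\,\frac{\partial^{2}\log p(z_{t,i}\mid\mathbf{z}_{t-1})}{\partial z_{t,i}^{2}}$, and those carrying the factor $\frac{\partial \log p(z_{t,i}\mid\mathbf{z}_{t-1})}{\partial z_{t,i}}$ together with the second derivatives $\frac{\partial a_{ij}}{\partial \hat{z}_{t,k}}$ of $\mathbf{h}$ and the Hessian of $\log|\det J_{\mathbf{h}}|$.

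The decisive step is to differentiate this vanishing identity once more with respect to each $z_{t-1,l}$. Since $a_{ij}$, the quantities $\frac{\partial a_{ij}}{\partial \hat{z}_{t,k}}$, and the $\log|\det J_{\mathbf{h}}|$ term depend only on $\mathbf{z}_t$ and not on $\mathbf{z}_{t-1}$, they pass through this derivative as constants, and only the density derivatives are acted upon. Stacking the $n$ resulting scalar equations (one per $l$) into a vector identity gives, for each pair $j \neq k$, a relation of the form $\sum_i (a_{ij}a_{ik})\,\mathring{\mathbf{v}}_{t,i} + \sum_i c_i^{jk}\,\mathbf{v}_{t,i} = \mathbf{0}$, that is, a vanishing linear combination of exactly the $2n$ vector functions $\mathbf{v}_{t,1}, \mathring{\mathbf{v}}_{t,1}, \dots, \mathbf{v}_{t,n}, \mathring{\mathbf{v}}_{t,n}$. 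The linear-independence hypothesis forces every coefficient to vanish, so in particular $a_{ij}a_{ik} = 0$ for all $i$ and all $j \neq k$. Because $\mathbf{h}$ is a diffeomorphism its Jacobian is nonsingular, so each row has at least one nonzero entry; combined with $a_{ij}a_{ik}=0$, each row (and column) has exactly one, i.e. $(a_{ij})$ is a scaled permutation matrix, whence $z_{t,i}$ is an invertible function of a single $\hat{z}_{t,\sigma(i)}$, which is the claimed permutation plus component-wise transformation.

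\textbf{The hard part} will be the bookkeeping in the cross-derivative expansion: cleanly isolating which terms survive differentiation in $z_{t-1,l}$ and confirming that the surviving expression is genuinely a combination of the $\mathbf{v}_{t,i}$ and $\mathring{\mathbf{v}}_{t,i}$ alone. This hinges on the structural point that $\mathbf{h}$ maps within a single time slice, so that all Jacobian-of-$\mathbf{h}$ factors are $\mathbf{z}_{t-1}$-free; I would state and use this explicitly. The concluding combinatorial argument, that row-wise orthogonality of the supports together with invertibility forces a permutation structure, is routine by comparison.
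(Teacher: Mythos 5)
Your proposal is correct and follows essentially the same route as the paper's own proof: you form the diffeomorphism between the true and estimated latent coordinates, combine the change-of-variables identity with the vanishing cross second derivatives of $\log p(\hat{\mathbf{z}}_t \mid \hat{\mathbf{z}}_{t-1})$ forced by conditional independence, differentiate once more in each $z_{t-1,l}$ (using exactly the paper's key observation that the Jacobian entries and $\log|\det J_{\mathbf{h}}|$ are $\mathbf{z}_{t-1}$-free), and then invoke the linear-independence hypothesis to conclude $a_{ij}a_{ik}=0$ for $j\neq k$, with invertibility delivering the permutation plus component-wise structure. This matches the paper's argument step for step, so there is nothing to correct.
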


\begin{proof}
    Combining Equation (\ref{eq: hatz_xt_invertible_function}) and Equation (\ref{eq: second_third_derivatives}) gives $\mathbf{z}_t=\mathbf{g}^{-1}(\hat{\mathbf{g}}^{-1}(\hat{\mathbf{z}}_t))$, where $\mathbf{h}=\mathbf{g}^{-1}\circ \hat{\mathbf{g}}^{-1}$.
 Since both $\hat{\mathbf{g}}$ and $\mathbf{g}$ are invertible, $\mathbf{h}$ is invertible. Let $\mathbf{H}_t$ be the Jacobian matrix of the transformation $h(\hat{\mathbf{z}}_t)$, and denote by $\mathbf{H}_{tki}$ its $(k,i)$th entry.

 First, it is straightforward to see that if the components of $\hat{\mathbf{z}}_t$ are mutually independent conditional on $\hat{\mathbf{z}}_{t-1}$ , then for any $i\neq j$, $\hat{z}_{t,i}$ and $\hat{z}_{t,k}$ are conditionally independent given $\hat{\mathbf{z}}_{t-1} \cup (\hat{\mathbf{z}}_t \setminus \{\hat{z}_{t,i},\hat{z}_{t,j}\}) $ .Mutual independence of the components of $\hat{\mathbf{z}}_t$ conditional of $\hat{\mathbf{z}}_{t-1}$ implies that $\hat{z}_{t,i}$ is independent from $\hat{\mathbf{z}}_t \setminus \{\hat{z}_{t,i}, \hat{z}_{t,j} \}$ conditional on $\hat{\mathbf{z}}_{t-1}$ , i.e.,
 $$p(\hat{z}_{t,i} \,|\, \hat{\mathbf{z}}_{t-1}) = p(\hat{z}_{t,i} \,|\, \hat{\mathbf{z}}_{t-1} \cup (\hat{\mathbf{z}}_t \setminus \{\hat{z}_{t,i},\hat{z}_{t,j}\})).$$
At the same time, it also implies $\hat{z}_{t,i}$ is independent from $\hat{\mathbf{z}}_{t} \setminus \{\hat{z}_{t,i}\}$ conditional on $\hat{\mathbf{z}}_{t-1}$, i.e.,
    $$p(\hat{z}_{t,i} \,|\, \hat{\mathbf{z}}_{t-1}) = p(\hat{\mathbf{z}}_{t,i} \,|\, \hat{\mathbf{z}}_{t-1} \cup (\hat{\mathbf{z}}_t \setminus \{\hat{z}_{t,i}\})).$$
Combining the above two equations gives $ p(\hat{z}_{t,i} \,|\, \hat{\mathbf{z}}_{t-1} \cup (\hat{\mathbf{z}}_t \setminus  \{\hat{z}_{t,i}\}))=
p(\hat{z}_{t,i} \,|\, \hat{\mathbf{z}}_{t-1} \cup (\hat{\mathbf{z}}_t  \setminus  \{\hat{z}_{t,i},\hat{z}_{t,j}\}))$, i.e., for $i\neq j$ , $\hat{z}_{t,i}$ and $\hat{z}_{t,j}$ are conditionally independent given $\hat{\mathbf{z}}_{t-1} \cup (\hat{\mathbf{z}}_t\setminus \{\hat{z}_{t,i}, \hat{z}_{t,j}\})$.
We then make use of the fact that if $\hat{z}_{t,i}$ and $\hat{z}_{t,j}$ are conditionally independent given $\hat{\mathbf{z}}_{t-1} \cup (\hat{\mathbf{z}}_t\setminus \{\hat{z}_{t,i}, \hat{z}_{t,j}\})$ , then
    $$\frac{\partial^2\log p(\hat{\mathbf{z}}_t,\hat{\mathbf{z}}_{t-1})} {\partial \hat{z}_{t,i} \partial \hat{z}_{t,j}} = 0,$$
assuming the cross second-order derivative exists . Since $p(\hat{\mathbf{z}}_t,\hat{\mathbf{z}}_{t-1})=p(\hat{\mathbf{z}}_t \,|\, \hat{\mathbf{z}}_{t-1})p(\hat{\mathbf{z}}_{t-1})$ while $p(\hat{\mathbf{z}}_{t-1})$ does not involve $\hat{z}_{t,i}$ or $\hat{z}_{t,j}$ , the above equality is equivalent to
\begin{equation} \label{Eq:iszero}
    \frac{\partial^2\log p(\hat{\mathbf{z}}_t \,|\, \hat{\mathbf{z}}_{t-1})} {\partial \hat{z}_{t,i} \partial \hat{z}_{t,j}} = 0
\end{equation}
The Jacobian matrix of the mapping from $(\mathbf{x}_{t-1}, \hat{\mathbf{z}}_t)$ to $(\mathbf{x}_{t-1}, \mathbf{z}_t)$ is $\begin{bmatrix}\mathbf{I} & \mathbf{0} \\ * & \mathbf{H}_t \end{bmatrix}$, where $*$ stands for a matrix, and the (absolute value of the) determinant of this Jacobian matrix is $|\mathbf{H}_t|$. Therefore $p(\hat{\mathbf{z}}_t, \mathbf{x}_{t-1}) = p({\mathbf{z}}_t, \mathbf{x}_{t-1})\cdot |\mathbf{H}_t|$. Dividing both sides of this equation by $p(\mathbf{x}_{t-1})$ gives 
\begin{equation} \label{Eq:J_trans}
 p(\hat{\mathbf{z}}_t \,|\, \mathbf{x}_{t-1}) = p({\mathbf{z}}_t \,|\, \mathbf{x}_{t-1}) \cdot |\mathbf{H}_t|. 
 \end{equation}
  Since $p({\mathbf{z}}_t \,|\, {\mathbf{z}}_{t-1}) = p({\mathbf{z}}_t \,|\, \mathbf{g}({\mathbf{z}}_{t-1})) = p({\mathbf{z}}_t \,|\, {\mathbf{x}}_{t-1})$ and similarly   $p(\hat{\mathbf{z}}_t \,|\, \hat{\mathbf{z}}_{t-1}) = p(\hat{\mathbf{z}}_t \,|\, {\mathbf{x}}_{t-1})$, Equation (\ref{Eq:J_trans}) tells us
 \begin{equation}
     \log p(\hat{\mathbf{z}}_t \,|\, \hat{\mathbf{z}}_{t-1}) = \log p({\mathbf{z}}_t \,|\, {\mathbf{z}}_{t-1}) + \log |\mathbf{H}_t| = \sum_{k=1}^n \log p(z_{t,k}|\mathbf{z}_{t-1})  + \log |\mathbf{H}_t|.
 \end{equation}
  Its partial derivative w.r.t. $\hat{z}_{t,i}$ is
  \begin{flalign} \nonumber
  \frac{\partial \log p(\hat{\mathbf{z}}_t \,|\, \hat{\mathbf{z}}_{t-1})}{\partial \hat{z}_{t,i}} &=  \sum_{k=1}^n \frac{\partial \log p(z_{t,k}|\mathbf{z}_{t-1})  }{\partial z_{t,k}} \cdot \frac{\partial z_{t,k}}{\partial \hat{z}_{t,i}} - \frac{\partial \log |\mathbf{H}_t|}{\partial \hat{z}_{t,i}} \\ \nonumber
  &= \sum_{k=1}^n \frac{\partial \log p(z_{t,k}|\mathbf{z}_{t-1}) }{\partial z_{t,k}} \cdot \mathbf{H}_{tki} - \frac{\partial \log |\mathbf{H}_t|}{\partial \hat{z}_{t,i}}.
 \end{flalign}
   Its second-order cross derivative is
   \begin{flalign} \label{Eq:cross}
  \frac{\partial^2 \log p(\hat{\mathbf{z}}_t \,|\, \hat{\mathbf{z}}_{t-1})}{\partial \hat{z}_{t,i} \partial \hat{z}_{t,j}}
  &= \sum_{k=1}^n \Big( \frac{\partial^2 \log p(z_{t,k}|\mathbf{z}_{t-1}) }{\partial z_{t,k}^2 } \cdot \mathbf{H}_{tki}\mathbf{H}_{tkj} + \frac{\partial \log p(z_{t,k}|\mathbf{z}_{t-1}) }{\partial z_{t,k}} \cdot \frac{\partial \mathbf{H}_{tki}}{\partial \hat{z}_{t,j}} \Big)- \frac{\partial^2 \log |\mathbf{H}_t|}{\partial \hat{z}_{t,i} \partial \hat{z}_{t,j}}.
 \end{flalign}
The above quantity is always 0 according to Equation (\ref{Eq:iszero}). Therefore, for each $l=1,2,...,n$ and each value $z_{t-1,l}$,  its partial derivative w.r.t.
 $z_{t-1,l}$ is always 0. That is,
  \begin{flalign}\label{eq:lind-ap}
  \frac{\partial^3 \log p(\hat{\mathbf{z}}_t \,|\, \hat{\mathbf{z}}_{t-1})}{\partial \hat{z}_{t,i} \partial \hat{z}_{t,j} \partial z_{t-1,l}}
  &= \sum_{k=1}^n \Big( \frac{\partial^3 \log p(z_{t,k}|\mathbf{z}_{t-1}) }{\partial z_{t,k}^2 \partial z_{t-1,l}} \cdot \mathbf{H}_{tki}\mathbf{H}_{tkj} + \frac{ \partial^2 \log p(z_{t,k}|\mathbf{z}_{t-1}) }{\partial z_{t,k} \partial z_{t-1,l}}  \cdot \frac{\partial \mathbf{H}_{tki}}{\partial \hat{z}_{t,j} } \Big) \equiv 0,
 \end{flalign}
  where we have made use of the fact that entries of $\mathbf{H}_t$ do not depend on $z_{t-1,l}$. 

If for any value of  $\mathbf{z}_t,\mathbf{v}_{t,1},\mathring{\mathbf{v}}_{t,1},\mathbf{v}_{t,2},\mathring{\mathbf{v}}_{t,2},...,,\mathbf{v}_{t,n},\mathring{\mathbf{v}}_{t,n}$ are linearly independent, to make the above equation hold true, one has to set $\mathbf{H}_{tki}\mathbf{H}_{tkj} = 0$ or $i\neq j$. That is, in each row of $\mathbf{H}_t$ there is only one non-zero entry. Since $h$ is invertible, then $\mathbf{z}_{t}$ must be an invertible, component-wise transformation of a permuted version of $\hat{\mathbf{z}}_t$.
\end{proof}

%这里的W是不是给的有问题，c1到c2n?
\section{Extension to Multiple Lags and Sequence Lengths}
\label{app: Extension to Multiple Lags and Sequence Lengths}
 For the sake of simplicity, we consider only one special case with $\tau=1$ and $L=2$ in Lemma \ref{lemma: 1}. Our identifiability proof can actually be applied for arbitrary lags directly. For instance, in the stationary case in Equation (\ref{eq: second_third_derivatives}), one can simply let
 \begin{equation}
\label{eq: any_tau_derivatives}
 % \small
 \begin{split}
    \mathbf{v}_{t,k} =
    \Big(\frac{\partial^{2}\log p(z_{t,k}|\mathbf{z}_{Hx}) }{\partial z_{t,k}\partial z_{t-\tau,1}},\frac{\partial^{2}\log p(z_{t,k}|\mathbf{z}_{Hx})}{\partial z_{t,k}\partial z_{t-\tau,2}},...,
        \frac{\partial^{2}\log p(z_{t,k}|\mathbf{z}_{Hx})}{\partial z_{t,k}\partial z_{t-\tau,n}}\Big)^{\mathsf{T}},\\
    \mathring{\mathbf{v}}_{t,k}=
\Big(\frac{\partial^{3}\log p(z_{t,k}|\mathbf{z}_{Hx})}{\partial z_{t,k}^{2}\partial z_{t-\tau,1}},\frac{\partial^{3}\log p(z_{t,k}|\mathbf{z}_{Hx})}{\partial z_{t,k}^{2}\partial z_{t-\tau,2}},...,
    \frac{\partial^{3}\log p(z_{t,k}|\mathbf{z}_{Hx})}{\partial z_{t,k}^{2}\partial z_{t-\tau,n}}\Big)^{\mathsf{T}}.
    \end{split}
\end{equation}
, where  $\mathbf{z}_{Hx}$ denotes the lagged latent variables up to maximum time lag $L$. We take derivatives with regard to $z_{ t-\tau,1}, …, z_{ t-\tau,n}$, which can be any latent temporal variables at lag $\tau$, instead of $z_{ t-1,1}, …, z_{t-1,n}$. If there exists one $\tau$ (out of the $L$ lags) that satisfies the condition, then the stationary latent processes are identifiable. 

%For the sake of simplicity, we consider only one special case with $\tau=1$ and $L=2$ in Theorem~\ref{app: Th2}. Our identifiability theorem can be actually extended to arbitrary lags and subsequences easily. For any given $\tau$, and subsequence which is centered at $\rvz_t$ with previous $lo$ and following $hi$ steps, i.e., $\rvc_t=\{\rvz_{t-lo},\cdots,\rvz_{t}\}$. In this case, the vector function $w(i,j,m)$ in Sufficient Variability Assumption should be modified as 
%\begin{equation}
%\small
%\begin{split}
    %w(i,j,m)=
    %&\Big(\frac{\partial^3 \log p(\rvc_t|\rvz_{t-lo-1},\cdots,\rvz_{t-lo-\tau})}{\partial c_{t,1}^2\partial z_{t-lo-1,m}},\cdots,\frac{\partial^3 \log p(\rvc_t|\rvz_{t-lo-1},\cdots,\rvz_{t-lo-\tau})}{\partial c_{t,2n}^2\partial z_{t-lo-1,m}}\Big)\oplus \\
    %&\Big(\frac{\partial^2 \log p(c_t|\rvz_{t-lo-1},\cdots,\rvz_{t-lo-\tau})}{\partial c_{t,1}\partial z_{t-lo-1,m}},\cdots,\frac{\partial^2 \log p(c_t|\rvz_{t-lo-1},\cdots,\rvz_{t-lo-\tau})}{\partial c_{t,2n}\partial z_{t-lo-1,m}}\Big)\oplus \\
   % & \Big(\frac{\partial^3 \log p(\rvc_t|\rvz_{t-lo-1},\cdots,\rvz_{t-lo-\tau})}{\partial c_{t,i}\partial c_{t,j}\partial z_{t-lo-1,m}}\Big)_{(i,j)\in \mathcal{E}(\mathcal{M}_{\rvc_t})}.
%\end{split}
%\end{equation}
%Besides, $2*(lo+1)+2|\mathcal{M}_{\rvc_t}|$ values of linearly independent vector functions in $z_{t',m}$ for $t'\in[t-lo-1,\cdots,t-lo-\tau]$ and $m\in[1,\cdots,n]$ are required as well. The rest part of the theorem remains the same, and the proof can be easily extended in such a setting.

\section{Granger non-causality of Latent Variables}
\label{app: Granger non-causality of Latent Variables}

In this section, we will theoretically analyze the identification of the proposed model. To achieve this, we begin with the definition of the Granger non-causality of latent variables as follows. 
% 这里的公式还没引用
% 对一下符号
\begin{definition}
\label{def: Granger non-causality of Latent Variables}
      Suppose that the object sequences $Z$ are generated according to Equation (1) and Equation (2), we can determine the Granger non-causality of the object $\mathbf{z}_{t-1,i}$ with respect to object $\mathbf{z}_{t,j}$ as follows: For all $\mathbf{z}_{t-1:t-\tau,1},\mathbf{z}_{t-1:t-\tau,2},\cdots,\mathbf{z }_{t-1:t-\tau,n}$,and the same variable with different values $\mathbf{z}_{t-1:t-\tau,i} \neq \mathbf{z}^{'}_{t-1:t-\tau,i}$,if the following condition holds:
\begin{equation}
\phi_j(\mathbf{z}_{t - 1:t - \tau,1}, \cdots, \mathbf{z}_{t - 1:t - \tau,i}, \cdots, \mathbf{z}_{t - 1:t - \tau,n}) = \phi_j(\mathbf{z}_{t - 1:t - \tau,1}, \cdots, \mathbf{z}_{t - 1:t - \tau,i}', \cdots, \mathbf{z}_{t - 1:t - \tau,n})
\end{equation}
that is, $\mathbf{x}_{t,j}$ is invariant to $\mathbf{x}_{t-1:t-\tau,i}$ with $\phi_j$.
\end{definition}

\begin{prop}
    Suppose that the estimated function $f_j$ well models the relationship between $\mathbf{z}_{t}$ and $\mathbf{z}_{t-1}$.Given the ground truth Granger Causal structure $\mathcal{G}=(V, E_V)$ with the maximum lag of 1, where $V$ and $E_V$ denote the nodes and edges, respectively. When the data are generated by Equation (1) and Equation (2), then $\mathbf{z}_{t-1,i} \rightarrow \mathbf{z}_{t,j} \notin E_V$ if and only if $\frac{\partial \mathbf{z}_{t,j}}{\partial  \mathbf{z}_{t-1,i}}=0$.
\end{prop}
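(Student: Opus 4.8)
The plan is to reduce the claimed equivalence to the definition of Granger non-causality for latent variables (Definition~\ref{def: Granger non-causality of Latent Variables}) combined with the elementary calculus fact that links the vanishing of a partial derivative to invariance of a function in the corresponding argument. By the structural equations (\ref{equ:g1}) and (\ref{equ:g2}), the coordinate $z_{t,j}$ is produced as $z_{t,j}=f_j(\text{Pa}(z_{t,j}),\rvu,\epsilon_{t,j})$, so that, once the noise $\epsilon_{t,j}$, the domain index $\rvu$, and all lagged coordinates other than $z_{t-1,i}$ are held fixed, $z_{t,j}$ is a differentiable function $\phi_j$ of $z_{t-1,i}$. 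The quantity $\partial z_{t,j}/\partial z_{t-1,i}$ in the statement is precisely the partial derivative of this function, so the whole proposition is a statement about whether $\phi_j$ genuinely depends on its $z_{t-1,i}$ argument.

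For the forward direction I would assume $z_{t-1,i}\to z_{t,j}\notin E_V$. By Definition~\ref{def: Granger non-causality of Latent Variables} this means that for every configuration of the remaining lagged variables and for any two values $z_{t-1,i}\neq z'_{t-1,i}$ one has $\phi_j(\dots,z_{t-1,i},\dots)=\phi_j(\dots,z'_{t-1,i},\dots)$; that is, $\phi_j$ is constant along the $z_{t-1,i}$ axis. A function that is constant in one of its arguments has identically vanishing partial derivative in that argument, hence $\partial z_{t,j}/\partial z_{t-1,i}=0$ everywhere.

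The converse is the more delicate step. Assuming $\partial z_{t,j}/\partial z_{t-1,i}=0$ for all values of the inputs, I would fix all other arguments and integrate along the $z_{t-1,i}$ direction: by the fundamental theorem of calculus (equivalently the mean value theorem), a continuously differentiable function whose derivative vanishes on a connected domain is constant there, so $\phi_j(\dots,z_{t-1,i},\dots)=\phi_j(\dots,z'_{t-1,i},\dots)$ for every pair of values. This is exactly the invariance condition of Definition~\ref{def: Granger non-causality of Latent Variables}, and therefore $z_{t-1,i}\to z_{t,j}\notin E_V$. Combining the two directions yields the stated \emph{if and only if}.

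The main obstacle I anticipate is making the converse rigorous: passing from a pointwise-zero partial derivative to global functional invariance requires that the domain of $z_{t-1,i}$ (with the other inputs fixed) be connected so that an integration path exists, and enough regularity on $\phi_j$ — continuous differentiability suffices — which must be inherited from the \textbf{well-modeled} assumption on $f_j$. A secondary point is to justify that the derivative $\partial z_{t,j}/\partial z_{t-1,i}$ computed from the estimated mechanism coincides with the derivative of the true generating function $\phi_j$; this is supplied by the hypothesis that $f_j$ accurately models the transition, so that the estimated and true partials agree wherever the structural dependence is being probed.
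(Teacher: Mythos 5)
Your proposal is correct and follows essentially the same route as the paper's own proof: both reduce the claim to the paper's definition of latent Granger non-causality (invariance of the transition map $\phi_j$ in the $z_{t-1,i}$ argument) and then invoke the elementary link between a vanishing partial derivative and constancy in that argument. If anything, your treatment of the converse direction is tighter than the paper's, which argues both directions by loose contradiction, whereas you make the needed regularity and connectedness assumptions explicit before applying the mean value theorem.
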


\begin{proof}
    $\Rightarrow$If $\mathbf{z}_{t-1,i} \rightarrow \mathbf{z}_{t,j} \notin E_V$ (there is no Granger Causality between $\mathbf{z}_{t-1,i}$ and $\mathbf{z}_{t,j}$ in the ground truth process), then according to Definition \ref{def: Granger non-causality of Latent Variables},there must be $\mathbf{z}_{t,j}\neq \mathbf{z}'_{t,j}$ for all the different values of $\mathbf{z}_{t-1,i}$. If  $\frac{\partial \mathbf{z}_{t,j}}{\partial  \mathbf{z}_{t-1,i}}=0$, then when the input values are different, i.e. $\mathbf{z}_{t-1,i} \neq \mathbf{z}'_{t-1,i}$,the outputs of $\mathbf{f}_j$ are also different, i.e., $\mathbf{z}_{t,j}\neq \mathbf{z}'_{t,j}$ , which results in contradictions.\\
    $\Leftarrow$: Suppose $\mathbf{z}_{t-1,i} \rightarrow \mathbf{z}_{t,j} \in E_V$ (there is Granger Causality between $\mathbf{z}_{t-1,i}$ and $\mathbf{z}_{t,j}$ in the ground truth process), for any $\mathbf{z}_{t,j}$ and $\mathbf{z}_{t,j}'$ in Definition \ref{def: Granger non-causality of Latent Variables}, if $\mathbf{z}_{t,j} \neq \mathbf{z}'_{t,j}$, there must be different input $\mathbf{z}_{t-1,i} \neq \mathbf{z}'_{t-1,i}$. If $\frac{\partial \mathbf{z}_{t,j}}{\partial  \mathbf{z}_{t-1,i}}=0$, then when $\mathbf{z}_{t,j} \neq \mathbf{z}_{t,j}'$, there might be $\mathbf{z}_{t-1,i} = \mathbf{z}_{t-1,i}'$, which results in contradictions.
    
\end{proof}

\section{Prior Likelihood Derivation} \label{app:prior}

 We first consider the prior of $\ln p(\rvz_{1:T})$. We start with an illustrative example of latent causal processes with two time-delay latent variables, i.e. $\rvz_t=[z_{t,1}, z_{t,2}]$ with maximum time lag $\tau=1$, i.e., $z_{t,i}=f_i(\rvz_{t-1}, \epsilon_{t,i})$ with mutually independent noises. Then we write this latent process as a transformation map $\mathbf{f}$ (note that we overload the notation $f$ for transition functions and for the transformation map):
    \begin{equation}
    \small
\begin{gathered}\nonumber
    \begin{bmatrix}
    \begin{array}{c}
        z_{t-1,1} \\ 
        z_{t-1,2} \\
        z_{t,1}   \\
        z_{t,2}
    \end{array}
    \end{bmatrix}=\mathbf{f}\left(
    \begin{bmatrix}
    \begin{array}{c}
        z_{t-1,1} \\ 
        z_{t-1,2} \\
        \epsilon_{t,1}   \\
        \epsilon_{t,2}
    \end{array}
    \end{bmatrix}\right).
\end{gathered}
\end{equation}
By applying the change of variables formula to the map $\mathbf{f}$, we can evaluate the joint distribution of the latent variables $p(z_{t-1,1},z_{t-1,2},z_{t,1}, z_{t,2})$ as 
\begin{equation}
\small
\label{equ:p1}
    p(z_{t-1,1},z_{t-1,2},z_{t,1}, z_{t,2})=\frac{p(z_{t-1,1}, z_{t-1,2}, \epsilon_{t,1}, \epsilon_{t,2})}{|\text{det }\mathbf{J}_{\mathbf{f}}|},
\end{equation}
where $\mathbf{J}_{\mathbf{f}}$ is the Jacobian matrix of the map $\mathbf{f}$, which is naturally a low-triangular matrix:
\begin{equation}
\small
\begin{gathered}\nonumber
    \mathbf{J}_{\mathbf{f}}=\begin{bmatrix}
    \begin{array}{cccc}
        1 & 0 & 0 & 0 \\
        0 & 1 & 0 & 0 \\
        \frac{\partial z_{t,1}}{\partial z_{t-1,1}} & \frac{\partial z_{t,1}}{\partial z_{t-1,2}} & 
        \frac{\partial z_{t,1}}{\partial \epsilon_{t,1}} & 0 \\
        \frac{\partial z_{t,2}}{\partial z_{t-1, 1}} &\frac{\partial z_{t,2}}{\partial z_{t-1,2}} & 0 & \frac{\partial z_{t,2}}{\partial \epsilon_{t,2}}
    \end{array}
    \end{bmatrix}.
\end{gathered}
\end{equation}
Given that this Jacobian is triangular, we can efficiently compute its determinant as $\prod_i \frac{\partial z_{t,i}}{\epsilon_{t,i}}$. Furthermore, because the noise terms are mutually independent, and hence $\epsilon_{t,i} \perp \epsilon_{t,j}$ for $j\neq i$ and $\epsilon_{t} \perp \rvz_{t-1}$, so we can with the RHS of Equation (\ref{equ:p1}) as follows
\begin{equation}
\label{equ:p2}
\begin{array}{rl}
    p(z_{t-1,1}, z_{t-1,2}, z_{t,1}, z_{t,2}) & = p(z_{t-1,1}, z_{t-1,2}) \times \frac{p(\epsilon_{t,1}, \epsilon_{t,2})}{|\mathbf{J}_{\mathbf{f}}|} \\
    & = p(z_{t-1,1}, z_{t-1,2}) \times \frac{\prod_i p(\epsilon_{t,i})}{|\mathbf{J}_{\mathbf{f}}|} \\
    % & \Downarrow \\
    % \log {p(z_{t,1}, z_{t,2} , z_{t-1,1}, z_{t-1,2})} & = \sum_{i=1}^{2} p(\epsilon_{t,i}) + \sum_{i=1}^{2} \log \left|\frac{\partial \epsilon_{t,i}}{\partial z_{t,i}}\right|
\end{array}
\end{equation}

Finally, we generalize this example and derive the prior likelihood below. Let $\{r_i\}_{i=1,2,3,\cdots}$ be a set of learned inverse transition functions that take the estimated latent causal variables, and output the noise terms, i.e., $\hat{\epsilon}_{t,i}=r_i(\hat{z}_{t,i}, \{ \hat{\rvz}_{t-\tau}\})$. Then we design a transformation $\mathbf{A}\rightarrow \mathbf{B}$ with low-triangular Jacobian as follows:
\begin{equation}
\small
\begin{gathered}
    \underbrace{[\hat{\rvz}_{t-\tau},\cdots,{\hat{\rvz}}_{t-1},{\hat{\rvz}}_{t}]^{\top}}_{\mathbf{A}} \text{  mapped to  } \underbrace{[{\hat{\rvz}}_{t-\tau},\cdots,{\hat{\rvz}}_{t-1},{\hat{\epsilon}}_{t,i}]^{\top}}_{\mathbf{B}}, \text{ with } \mathbf{J}_{\mathbf{A}\rightarrow\mathbf{B}}=
    \begin{bmatrix}
    \begin{array}{cc}
        \mathbb{I}_{n\times \tau} & 0\\
                    * & \text{diag}\left(\frac{\partial r_{i,j}}{\partial {\hat{z}}_{t,j}}\right)
    \end{array}
    \end{bmatrix}.
\end{gathered}
\end{equation}
Similar to Equation (\ref{equ:p2}), we can obtain the joint distribution of the estimated dynamics subspace as:
\begin{equation}
    \log p(\mathbf{A})=\underbrace{\log p({\hat{\rvz}}_{t-\tau},\cdots, {\hat{\rvz}}_{t-1}) + \sum^{n}_{i=1}\log p({\hat{\epsilon}}_{t,i})}_{\text{Because of mutually independent noise assumption}}+\log (|\text{det}(\mathbf{J}_{\mathbf{A}\rightarrow\mathbf{B}})|)
\end{equation}
Finally, we have:
\begin{equation}
\small
    \log p({\hat{\rvz}}_t|\{{\hat{\rvz}}_{t-\tau}\})=\sum_{i=1}^{n} \log {p({\hat{\epsilon}_{t,i}})} + \sum_{i=1}^{n}\log |\frac{\partial r_i}{\partial {\hat{z}}_{t,i}}|
\end{equation} 
Since the prior of $p(\hat{\rvz}_{1:T})=p(\hat{z}_{1:\tau})\prod_{t=\tau+1}^{T} p(\hat{\rvz}_{t}| \{ \hat{\rvz}_{t-\tau}\})$ with the assumption of first-order Markov assumption, we can estimate $\log p(\hat{\rvz}_{1:T})$ as follows:
\begin{equation}
\small
    \log p(\hat{\rvz}_{1:T})=\log {p(\hat{z}_{1:\tau})} \prod_{t=\tau+1}^T\left(\sum_{i=1}^n \log p(\hat{\epsilon}_{t,i}) + \sum_{i=1}^n \log |\frac{\partial r_i}{\partial z_{t,i}}|  \right),
\end{equation}
Where the noise \( p(\hat{\epsilon}_{\tau,i}) \) and the initial latent variables \( p(\hat{\rvz})_{1:\tau} \) are both assumed to follow Gaussian distributions. When $\tau=1$, $\log p(\rvz_{1:T})=\log {p(\hat{z}_{1})}\prod_{t=2}^T\left(\sum_{i=1}^n \log p(\hat{\epsilon}_{t,i}) + \sum_{i=1}^n \log |\frac{\partial r_i}{\partial z_{t,i}}|  \right)$. In a similar manner, the distribution \( p(\hat{\rvz}_{t+1:T}|\hat{\rvz}_{1:t}) \) can be estimated using analogous methods.

\section{More Experiment Results}

This is more experiment results
% \bibliographystyle{IEEEtran}
% \bibliography{ref}

\begin{figure*}[h]
    \centering
    \includegraphics[width=1.01\textwidth, trim=0cm  12cm 0cm 0cm, clip]{}
    \caption{Visualization of prediction results across varying forecast lengths for the transition from domain 1 to domain 2 in the ETT dataset. Subfigures (a), (b), (c), and (d) represent forecast lengths of 10, 20, 30, and 40, respectively.}
    \label{fig:diff_len_mase}
\end{figure*}

\begin{table*}[htbp]
\caption{The MAE and MSE on the Electricity Load Diagrams dataset, where R-COAT, iTRANS, and TMixer are the abbreviation of RAINCOAT, iTransformer, and
TimeMixer, respectively.}
\label{tab:eld}
\setlength{\tabcolsep}{3mm}
\resizebox{\textwidth}{!}{%
\begin{tabular}{@{}c|c|ccccccccccc@{}}
\toprule
Metric                & Task              & SASA            & GCA    & DAF    & CLUDA  & R-COAT & AdvSKM & iTRANS & TMixer & TSLANet & SegRNN & Ours            \\ \midrule
\multirow{12}{*}{\rotatebox{90}{MSE}} & 1 $\rightarrow$ 2 & 0.1781          & 0.2919 & 0.2142 & 0.2412 & 0.2472 & 0.2143 & 0.1778 & 0.2066 & 0.2551  & 0.2675 & \textbf{0.1452} \\
                      & 1 $\rightarrow$ 3 & 0.1544          & 0.2810 & 0.1772 & 0.2133 & 0.2251 & 0.1879 & 0.1723 & 0.2064 & 0.2464  & 0.2356 & \textbf{0.1410} \\
                      & 1 $\rightarrow$ 4 & 0.1855          & 0.3421 & 0.2043 & 0.2689 & 0.2804 & 0.2205 & 0.2121 & 0.3375 & 0.2763  & 0.3018 & \textbf{0.1586} \\ \cmidrule{2-13} 
                      & 2 $\rightarrow$ 1 & 0.1216          & 0.2062 & 0.1426 & 0.1503 & 0.1523 & 0.1217 & 0.1758 & 0.2742 & 0.2401  & 0.2372 & \textbf{0.1171} \\
                      & 2 $\rightarrow$ 3 & 0.1315          & 0.2097 & 0.1527 & 0.1643 & 0.1625 & 0.1372 & 0.1459 & 0.2249 & 0.2172  & 0.2144 & \textbf{0.1129} \\
                      & 2 $\rightarrow$ 4 & 0.1892          & 0.3018 & 0.2480 & 0.2528 & 0.2285 & 0.2157 & 0.2084 & 0.3372 & 0.3161  & 0.2903 & \textbf{0.1587} \\ \cmidrule{2-13} 
                      & 3 $\rightarrow$ 1 & \textbf{0.1360} & 0.2514 & 0.1550 & 0.1560 & 0.1922 & 0.1473 & 0.1927 & 0.2840 & 0.2612  & 0.2512 & \textbf{0.1384} \\
                      & 3 $\rightarrow$ 2 & 0.1181          & 0.2262 & 0.1305 & 0.1468 & 0.1625 & 0.1190 & 0.1577 & 0.2371 & 0.2281  & 0.2250 & \textbf{0.1113} \\
                      & 3 $\rightarrow$ 4 & 0.1874          & 0.3277 & 0.2166 & 0.2835 & 0.2868 & 0.2505 & 0.2327 & 0.3328 & 0.3054  & 0.2928 & \textbf{0.1839} \\ \cmidrule{2-13} 
                      & 4 $\rightarrow$ 1 & 0.1203          & 0.2496 & 0.1277 & 0.1416 & 0.1589 & 0.1206 & 0.1791 & 0.2380 & 0.2449  & 0.2308 & \textbf{0.1063} \\
                      & 4 $\rightarrow$ 2 & 0.1442          & 0.2591 & 0.1636 & 0.1851 & 0.1829 & 0.1600 & 0.1655 & 0.2140 & 0.2493  & 0.2340 & \textbf{0.1176} \\
                      & 4 $\rightarrow$ 3 & 0.1271          & 0.2153 & 0.1372 & 0.1670 & 0.1734 & 0.1386 & 0.1558 & 0.2779 & 0.2435  & 0.2155 & \textbf{0.1100} \\ \midrule
\multirow{12}{*}{\rotatebox{90}{MAE}} & 1 $\rightarrow$ 2 & 0.3103          & 0.4131 & 0.3508 & 0.3682 & 0.3866 & 0.3494 & 0.3116 & 0.3264 & 0.3794  & 0.3873 & \textbf{0.2821} \\
                      & 1 $\rightarrow$ 3 & 0.2959          & 0.4107 & 0.3228 & 0.3531 & 0.3717 & 0.3343 & 0.3127 & 0.3305 & 0.3760  & 0.3606 & \textbf{0.2816} \\
                      & 1 $\rightarrow$ 4 & 0.3270          & 0.4548 & 0.3498 & 0.3999 & 0.4167 & 0.3667 & 0.3511 & 0.4419 & 0.3972  & 0.4147 & \textbf{0.3007} \\ \cmidrule{2-13} 
                      & 2 $\rightarrow$ 1 & 0.2593          & 0.3396 & 0.2842 & 0.2946 & 0.2954 & 0.2595 & 0.3065 & 0.3883 & 0.3638  & 0.3570 & \textbf{0.2519} \\
                      & 2 $\rightarrow$ 3 & 0.2722          & 0.3491 & 0.2988 & 0.3061 & 0.3105 & 0.2817 & 0.2842 & 0.3569 & 0.3512  & 0.3421 & \textbf{0.2531} \\
                      & 2 $\rightarrow$ 4 & 0.3286          & 0.4238 & 0.3879 & 0.3752 & 0.3662 & 0.3447 & 0.3411 & 0.4391 & 0.4321  & 0.4046 & \textbf{0.2961} \\ \cmidrule{2-13} 
                      & 3 $\rightarrow$ 1 & 0.2744          & 0.3797 & 0.2967 & 0.2976 & 0.3305 & 0.2886 & 0.3238 & 0.4012 & 0.3880  & 0.3760 & \textbf{0.2740} \\
                      & 3 $\rightarrow$ 2 & 0.2532          & 0.3584 & 0.2707 & 0.2860 & 0.3059 & 0.2564 & 0.2960 & 0.3635 & 0.3577  & 0.3508 & \textbf{0.2472} \\
                      & 3 $\rightarrow$ 4 & 0.3240          & 0.4367 & 0.3572 & 0.3975 & 0.4084 & 0.3802 & 0.3650 & 0.4405 & 0.4227  & 0.4090 & \textbf{0.3202} \\ \cmidrule{2-13} 
                      & 4 $\rightarrow$ 1 & 0.2564          & 0.3819 & 0.2672 & 0.2833 & 0.3021 & 0.2592 & 0.3114 & 0.3549 & 0.3734  & 0.3517 & \textbf{0.2391} \\
                      & 4 $\rightarrow$ 2 & 0.2794          & 0.3910 & 0.3028 & 0.3212 & 0.3236 & 0.2952 & 0.3004 & 0.3388 & 0.3742  & 0.3599 & \textbf{0.2527} \\
                      & 4 $\rightarrow$ 3 & 0.2642          & 0.3537 & 0.2802 & 0.3108 & 0.3185 & 0.2831 & 0.2950 & 0.3797 & 0.3744  & 0.3396 & \textbf{0.2458} \\ \bottomrule
\end{tabular}%
}
\end{table*}

% Can use something like this to put references on a page
% by themselves when using endfloat and the captionsoff option.
\ifCLASSOPTIONcaptionsoff
  \newpage
\fi

\clearpage

\end{document}